\newcommand{\vct}[1]{\boldsymbol{#1}} 
\newcommand{\mat}[1]{\boldsymbol{#1}} 
\newcommand{\field}[1]{\mathbb{#1}}
\newcommand{\R}{\field{R}} 
\newcommand{\T}{^{\textrm T}} 
\newcommand{\ProbOpr}[1]{\mathbb{#1}}
\newcommand{\expect}[2]{%
\ifthenelse{\equal{#2}{}}{\ProbOpr{E}_{#1}}
{\ifthenelse{\equal{#1}{}}{\ProbOpr{E}\left[#2\right]}{\ProbOpr{E}_{#1}\left[#2\right]}}} 
\newcommand{\var}[2]{%
\ifthenelse{\equal{#2}{}}{\ProbOpr{VAR}_{#1}}
{\ifthenelse{\equal{#1}{}}{\ProbOpr{VAR}\left[#2\right]}{\ProbOpr{VAR}_{#1}\left[#2\right]}}} 
\DeclareMathOperator{\argmax}{arg\,max}
\newtheorem{thm}{Theorem}
\newtheorem{theorem}{Theorem}
\newcommand{\vx}{{\vct{x}}}
\newcommand{\vz}{{\vct{z}}}
\newcommand{\vw}{\vct{w}}
\newcommand{\vP}{\vct{P}}
\newcommand{\vphi}{\vct{\phi}}
\newcommand{\vdelta}{\vct{\delta}}
\newcommand{\vomega}{\vct{\omega}}
\newcommand{\vDD}{\vct{\Delta}}
\newcommand{\eat}[1]{}
\newcommand{\ibm}{\textsf{ibm}\xspace}
\newcommand{\rbm}{\textsf{rbm}\xspace}
\newcommand{\onek}{\textsf{1-k}\xspace}
\newcommand{\ak}{\textsf{a-2-k}\xspace}
\newcommand{\mk}{\textsf{m-2-k}\xspace}
\newcommand{\ck}{\textsf{c-2-k}\xspace}
\newcommand{\ter}{\textsf{TER}\xspace}
\begin{document}

\title{How to Scale Up Kernel Methods\\
to Be As Good As Deep Neural Nets}

\author{Zhiyun Lu$^{1\dagger}$, Avner May$^{2\dagger}$\\
Kuan Liu$^{1\ddagger}$, Alireza Bagheri Garakani$^{1\ddagger}$, Dong Guo$^{1\ddagger}$, Aur\'{e}lien Bellet$^{4\ddagger}$\footnote{Most of the work in this paper was carried out
while the author was affiliated with Department of Computer Science,
University of Southern California.}\\
Linxi Fan$^2$, Michael Collins$^2$, Brian Kingsbury$^3$, Michael Picheny$^3$, Fei Sha$^{1\P}$  \\[0.5em]
$^1$ Dept. of Computer Science,  U. of Southern California, Los Angeles, CA 90089\\
  \texttt{\{zhiyunlu, kuanl, bagherig, dongguo, feisha\}@usc.edu}\\[0.5em]
$^2$ Dept. of Computer Science, Columbia University, New York, New York 10027\\
 \texttt{\{avnermay, mcollins\}@cs.columbia.edu}, \texttt{lf2422@columbia.edu}\\[0.5em]
$^3$ IBM T. J. Watson Research Center, Yorktown Heights, NY 10598  \\
 \texttt{\{bedk, picheny\}@us.ibm.com}\\[1em]
 $^4$ LTCI UMR 5141, T\'{e}l\'{e}com ParisTech \& CNRS, France\\
 \texttt{aurelien.bellet@telecom-paristech.fr}\\[1em]
 $^\dagger$ and $^\ddagger$: shared first and  second co-authorships, respectively\\[0.5em]
 $^\P$: to whom questions and comments should be sent
    }

\maketitle

\begin{abstract}
The computational complexity of kernel methods has often been a major barrier for applying them to large-scale learning problems. We argue that this barrier can be effectively overcome. In particular, we develop methods to scale up kernel models to successfully tackle large-scale learning problems that are so far only approachable by deep learning architectures. Based on the seminal work by \citep{rahimi07random} on approximating kernel functions with features derived from random projections, we advance the state-of-the-art by proposing methods that can efficiently train models with hundreds of millions of parameters,  and learn optimal representations from multiple kernels.  We conduct extensive empirical studies on problems from image recognition and automatic speech recognition, and show that the performance of our kernel models matches that of well-engineered deep neural nets (DNNs). To the best of our knowledge, this is the first time that a direct  comparison between these two methods on large-scale problems is reported. Our kernel methods have several appealing properties: training with convex optimization, cost for training a single model comparable  to DNNs, and significantly reduced total cost due to fewer hyperparameters to tune for model selection. Our contrastive study between these two very different but equally competitive models  sheds light on fundamental questions such as how to learn good representations.
\end{abstract}

\section{Introduction}

Deep neural networks (DNNs) and other types of deep learning architecture have made significant advances~\citep{bengio09deep,bengio13representation}. In both well-benchmarked tasks and real-world applications, such as automatic speech recognition~\citep{hinton12deep,mohamed12dbn,seide11dnn} and image recognition~\citep{krizhevsky12imagenet,szegedy14googlelenet}, deep learning architectures have achieved an unprecedented level of success and have generated major impact.

Arguably, the most instrumental factors contributing to their success are: (1) learning from a huge amount of training data for highly complex models with millions to billions of parameters; (2) adopting simple but effective optimization methods such as stochastic gradient descent; (3) combatting overfitting with new schemes such as drop-out~\citep{hinton12dropout}; and (4) computing with massive parallelism on GPUs. New techniques as well as ``tricks of the trade'' are frequently invented and added to the toolboxes for machine learning researchers and practitioners.

In stark contrast, there have been many fewer publicly known successful applications of kernel methods (such as support vector machines) to problems at a scale comparable to the speech and image recognition problems tackled by DNNs.  This is a surprising chasm, noting that kernel methods have been extensively studied both theoretically and empirically for their power of modeling highly nonlinear data~\citep{scholkopf02}. Moreover, the connection between kernel methods and (infinite) neural networks has also been long noted~\citep{neal94priors,williams96infinite,cho09arccos}.

Nonetheless, a common misconception is that it may be difficult, if not impossible, for kernel methods to catch up with deep learning methods in addressing large-scale learning problems.  In particular, many kernel-based algorithms scale quadratically in the number of training samples. This barrier in computational complexity makes it especially challenging for kernel methods to reap the benefits of learning from a very large amount of data, while deep learning architectures are especially adept at it.

We contend that this skepticism can be sufficiently attenuated. Concretely, in this paper, we investigate and propose new ideas tailored for kernel methods, with the aim of scaling them up to take on challenging problems in computer vision and automatic speech recognition. 

To this end, we build on the work by \citep{rahimi07random} on approximating kernel functions with features derived from random projections. Our innovation is, however, to advance the state-of-the-art to a much larger scale. Concretely, we propose fast training methods for models with hundreds of millions of parameters --- these methods are necessary for classifiers using hundreds of thousands of features to recognize thousands of categories. We also propose scalable methods for combining multiple kernels as ways of learning feature representations. Interestingly, we show multiplicative combination of kernels scale better than additive combinations. 

We validate our approaches with extensive empirical studies. We contrast kernel models to DNNs on 4 large-scale benchmark datasets, some of which are often used to demonstrate the effectiveness of DNNs. We show that the performance of large-scale kernel models approaches or surpasses their deep learning counterparts, which are either exhaustively optimized by us or are well-accepted as yardsticks in industry standards. 

While providing a recipe to obtain state-of-the-art large-scale kernel models, another important contribution of our work is to shed light on new perspectives and opportunities for future study. The techniques we have developed are easy to implement, readily reproducible, and incur much less computational cost (for hyperparameter tuning and model selection) than deep learning architectures. Thus, they are valuable tools, tested and verified to be effective for constructing comparative systems. 

Comparative studies enabled by such systems will, in our view be indispensable in pursuing the higher goal of exploring and acquiring an understanding of how the two camps of methods differ, for instance in learning new representations of the original data\footnote{Note this inquiry would be informative only if both kernel methods and deep learning methods attain similar performance yet exploit different aspects of data.}.  As an example, we show that combining kernel models and DNNs improves over either individual model, suggesting that the two paradigms learn different yet complementary representations from the data. We believe that research in this line will offer deep insights, and broaden the theory and practice of designing alternative methods to both DNNs and kernel methods for large-scale learning.

The rest of the paper is organized as follows. We  briefly review related work in section~\ref{sRelated}. In section~\ref{sRandom}, we give a brief account of \citep{rahimi07random}.  We describe our approaches in section~\ref{sApproach}. In section~\ref{sResults}, we
report extensive experiments comparing DNNs and kernel
methods on the problems in image and  automatic speech
recognition. We conclude and discuss future directions in section~\ref{sDiscuss}.

\section{Related work}
\label{sRelated}

The computational complexity of kernel methods, such as support vector machines, depends quadratically on the number of training examples at training time and linearly on the number of training examples at the testing time. Hence, scaling up kernel methods has been a long-standing and actively studied problem. \citep{largescalekernelmachines07} summarizes several earlier efforts in this vein. 

With clever implementation tricks such as computation caching  (for example, keeping only a small portion of the very large kernel matrix inside the memory), earlier kernel machines can cope with hundreds of thousands of samples~\citep{smolaXX,decoste02invariant}. \citep{bottou06svm} provides an excellent account of various design considerations.

To further reduce the dependency on the number of training samples, a more effective strategy is to actively select training samples~\citep{bottouXX}. An early version of this idea was reflected in the Sequential Minimal Optimization (SMO) algorithm~\citep{platt98smo}. With more sophistication, this technique was extended to enable training SVMs on 8 million samples~\citep{loosli06selective}. Alternative approaches exploit the equivalence between SVMs and sparse greedy approximation and solve SVMs approximately with a smaller subset of examples called coresets\footnote{We also experimented those techniques though we were not able to identify significant empirical success.}~\citep{tsang05cvm,clarkson10coreset}.  Exploiting structures of the kernel matrix  can scale kernel methods to 2 million to 50 million training samples~\citep{sonnenburg10coffin}.  Note that at the time of publication, none of the above-mentioned methods had been directly compared to DNNs.

Instead of reducing the number of training samples, we can reduce the dimensionality of kernel features. In theory, those features are infinite dimensional. But for any practical problem, the dimensionality is bounded above by the number of training samples. The main idea is then to directly use such features, after dimensionality reduction, to construct classifiers (i.e., solving the optimization problem of SVM in the primal space).  

Thus far, approximating kernels with finite-dimensional features has been recognized as a promising way of scaling up kernel methods. The most relevant one to our paper is the early observation by  \citeauthor{rahimi07random} that inner products between features derived from random projections can be used to approximate translation-invariant kernels, a direct result of spectral analysis of positive functions~\citep{berg84,scholkopf02,rahimi07random}. Their follow-up work of using those random features ---  weighted random kitchen sink~\citep{rahimi08kitchen} ---  is a major inspiration to our work.   

Since, there has  been  a growing interest in using random  projections to approximate different kernels~\citep{kar12random,hamid14compact,le13fastfood,vedaldi12additive}. For example, \citep{dai14scalable} studied how to use random features for online learning. We note that the amount of time for such classifiers  to make a prediction depends linearly on the number of training samples. This could be a concern when the number of training samples is large.

In spite of these progresses,  there have been only a few reported large-scale empirical studies of those techniques on challenging tasks from speech recognition and computer vision, on which DNNs have been  highly effective. In the context of automatic speech recognition, examples of directly using kernel methods  were reported~\citep{deng12convex,cheng11arcos,huang14kernel}. However, the tasks were fairly small-scale (for instance, on the TIMIT dataset).  Moreover, none of them explores kernel learning as a way of learning new representations.  In contrast, one major aspect of our work is to use multiple kernel learning to arrive at new representations so as to reduce the gap between DNNs and kernel methods, cf. section~\ref{sMKL}.

\section{Features from random projections}
\label{sRandom}
In what follows, we describe the basic idea we have built upon to scale up kernel methods. The technique is based on explicitly and efficiently constructing features  --- they are generated randomly ---  whose inner products then approximate kernel functions. Once such features are constructed, they can be used as inputs by any classifier.

\subsection{Generate features by random projections}

Given a pair of data points $\vx$ and $\vz$, a positive definite kernel function $k(\cdot, \cdot): \R^d \times \R^d \rightarrow \R$ defines an inner product between the images of the two data points under a (nonlinear) mapping $\vphi(\cdot): \R^d \rightarrow \R^M$,
\begin{equation}
k(\vx, \vz) = \vphi(\vx)\T \vphi(\vz)
\label{eKernelDef}
\end{equation}
where the dimensionality $M$ of the resulting mapping $\vphi(\vx)$ can be infinite (in theory).  

Kernel methods avoid inference in $\R^M$. Instead, they rely on the kernel matrix over the training samples. When $M$ is far greater than $N$, the number of training samples, this trick provides a nice computational advantage. However, when $N$ is exceedingly large, this complexity at the quadratic order of $N$  becomes impractical.

\citeauthor{rahimi07random} leverage a classical result in harmonic analysis and provide a fast way to approximate $k(\cdot, \cdot)$ with \emph{finite}-dimensional features~\citep{rahimi07random}:
\begin{theorem}{(Bochner's theorem, adapted from \citep{rahimi07random})}
A continuous kernel $k(\vx, \vz) = k (\vx - \vz)$ is positive definite if and only if $k(\vdelta)$ is the Fourier transform of a non-negative measure.
\end{theorem}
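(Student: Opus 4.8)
The plan is to prove both directions of Bochner's theorem, treating the harder ``only if'' direction via spectral/Fourier-analytic tools.

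\textbf{The easy direction (Fourier transform of a non-negative measure $\Rightarrow$ positive definite).}
Suppose $k(\vdelta) = \int_{\R^d} e^{i \vomega\T \vdelta}\, d\mu(\vomega)$ for some finite non-negative measure $\mu$. I would verify positive definiteness directly: for any finite set of points $\vx_1, \dots, \vx_n$ and scalars $c_1, \dots, c_n \in \C$,
\begin{equation}
\sum_{j,l} c_j \bar{c}_l\, k(\vx_j - \vx_l) = \int_{\R^d} \sum_{j,l} c_j \bar c_l\, e^{i\vomega\T(\vx_j - \vx_l)}\, d\mu(\vomega) = \int_{\R^d} \Bigl| \sum_j c_j e^{i\vomega\T \vx_j} \Bigr|^2 d\mu(\vomega) \ge 0,
\end{equation}
where the interchange of sum and integral is justified because $\mu$ is finite and the integrand is bounded. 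Continuity of $k$ follows from dominated convergence. This half is a short computation.

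\textbf{The hard direction (positive definite $\Rightarrow$ Fourier transform of a non-negative measure).}
This is the substantive content and the main obstacle. Given a continuous positive definite $k(\vdelta)$, I would construct the measure $\mu$ as follows. First, note positive definiteness forces $k(\vzero) \ge 0$, $k(-\vdelta) = \overline{k(\vdelta)}$, and $|k(\vdelta)| \le k(\vzero)$, so $k$ is bounded. The strategy is to exhibit $\mu$ as a weak limit of the absolutely continuous measures with densities
\begin{equation}
p_T(\vomega) = \frac{1}{(2\pi)^d} \int_{[-T,T]^d} k(\vdelta)\, e^{-i\vomega\T \vdelta}\, \prod_{m}\Bigl(1 - \tfrac{|\delta_m|}{T}\Bigr)\, d\vdelta,
\end{equation}
i.e., a Fej\'er-type (Ces\`aro) smoothing of the formal Fourier inversion. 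The key lemma to establish is that $p_T(\vomega) \ge 0$ for all $\vomega$; this is where positive definiteness is used, by writing the Fej\'er kernel weight as $|\cdot|^2$ of an exponential sum/integral and recognizing the resulting double integral of $k(\vx - \vz)$ against $\overline{g(\vx)}g(\vz)$ as nonnegative by (a continuous/integral form of) the positive-definiteness inequality above. Next I would show the total masses $\int p_T(\vomega)\, d\vomega$ are bounded (they converge to $k(\vzero)$), so by Helly/Prokhorov the family $\{p_T\, d\vomega\}$ has a weakly convergent subsequence with limit $\mu$, and finally verify $\hat\mu = k$ by checking that $\int k(\vdelta) e^{-i\vomega\T\vdelta}$ against test functions matches, using that the Fej\'er smoothing recovers $k$ pointwise as $T \to \infty$ by continuity of $k$.

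\textbf{Remarks on scope.}
Since the statement is quoted as ``adapted from'' \citep{rahimi07random} and is classical, an acceptable alternative is to cite the standard references (e.g., \citep{berg84,scholkopf02}) and present only the easy direction in full, which is the direction actually exploited to build the random features; I would mention this explicitly. The genuinely delicate points, should one give the full argument, are (i) the positivity of the Fej\'er-smoothed inversion $p_T$, which must invoke positive definiteness in its continuous/integral form, and (ii) the tightness/weak-compactness step needed to pass to the limiting measure $\mu$; the normalization by $1/(2\pi)^d$ and the precise form of the summability kernel are routine bookkeeping once these two are in place.
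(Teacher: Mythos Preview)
Your proposal is mathematically sound, but you should know that the paper does not prove this theorem at all: it is simply stated as a classical result, with attribution to \citep{rahimi07random} and implicit reference to standard harmonic-analysis texts such as \citep{berg84,scholkopf02}. There is no proof in the paper to compare against.

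That said, your outline is correct and standard. The easy direction is exactly the computation one writes down. For the hard direction, the Fej\'er/Ces\`aro smoothing approach you sketch is one of the classical routes (others go through the Riesz representation theorem applied to the positive linear functional that $k$ induces on trigonometric polynomials, or through the GNS construction); your identification of the two genuinely nontrivial steps---positivity of $p_T$ via the integral form of positive definiteness, and tightness to extract the limiting measure---is accurate. Your ``Remarks on scope'' already anticipate the situation here: in this paper the theorem is quoted as background, and only the easy direction (that the spectral measure exists and yields the sampling scheme) is ever used. So the pragmatic choice you flag---cite the references and optionally display the one-line computation for the sufficiency direction---is precisely what the paper does, minus the computation.
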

More specifically, for shift-invariant kernels such as Gaussian RBF and Laplacian kernels,
\begin{equation}
k^{\textsf{rbf}} = e^{-\|\vx-\vz\|_2^2/2\sigma^2},\quad k^{\textsf{lap}} = e^{-\|\vx-\vz\|_1/\sigma}
\label{eGauLap}
\end{equation}
 the theorem implies that the kernel function can be expanded with harmonic basis, namely
\begin{equation}
k(\vx - \vz)  = \int_{R^d} p(\vomega) e^{j\vomega\T(\vx-\vz)}\, d\vomega = \expect{\vomega}{e^{j\vomega\T\vx}e^{-j\vomega\T\vz}}
\label{eFourier}
\end{equation}
where $p(\vomega)$ is the density of a $d$-dimensional probability distribution. The expectation is computed on complex-valued functions of $\vx$ and $\vz$. For real-valued kernel functions, however, they can be simplified to the cosine and sine functions, see below. 

For Gaussian RBF and Laplacian kernels, the corresponding densities are Gaussian and Cauchy distributions:
\begin{equation}
\label{RBF}
p^{\textsf{rbf}}(\vomega) = N\left(0, \frac{1}{\sigma}\mat{I}\right),\quad p^{\textsf{lap}}(\vomega) = \prod_d \frac{1}{\pi(1+\sigma^2\omega_d^2)}
\end{equation}

The harmonic decomposition suggests a sampling-based approach of  approximating the kernel function. Concretely, we draw $\{\vomega_1, \vomega_2, \ldots, \vomega_D\}$ from the distribution $p(\vomega)$ and use the sample mean to approximate
\begin{equation}
k(\vx, \vz) \approx \frac{1}{D} \sum_{i=1}^{D} \phi_{\vomega_i}(\vx) \phi_{\vomega_i}(\vz) =   \hat{\vphi}(\vx)\T\hat{\vphi}(\vz)
\label{eApprox}
\end{equation}
The \emph{\textbf{random feature vector}} $\hat{\vphi}$ is thus composed of scaled cosines of random projections
\begin{equation}
\phi_{\vomega_i}(\vx) =  \sqrt{\frac{2}{D}} \cos (\vomega_i\T\vx + b_i)
\label{eRandomFeature}
\end{equation}
where $b_i$ is a random variable, uniformly sampled from $[0,\ 2\pi]$. Details on the convergence property of this approximation can be found in~\citep{rahimi07random}.

A key advantage of using approximate features  over standard kernel methods is its
scalability to large datasets. Learning with
a representation $\hat{\vphi}(\cdot) \in \R^D$ is relatively efficient
provided that $D$ is far less than the number of training samples.  For example, in our experiments (cf. section~\ref{sResults}), we have $7$ million to $10$ million  training samples, while $D=50,000$ often leads to good performance.

\subsection{Use random features in classifiers}
\label{sRKS}

Just as the standard kernel methods (SVMs or kernelized linear regression) can be seen as fitting data with linear models in kernel-induced feature spaces, 
we can plug in the random feature vector $\hat{\vphi}(\vx)$ in just about any (linear) model. In this paper, we focus on using them to construct multinomial logistic regression. Specifically, our model is a special instance of the \emph{weighted sum of random kitchen sinks}~\citep{rahimi08kitchen}
\begin{equation}
p( y = c |\vx) = \frac{e^{\vw_c\T \hat{\vphi}(\vx)}}{\sum_{c}e^{\vw_c\T \hat{\vphi}(\vx)}}
\label{eMLR}
\end{equation}
where the label $y$ can take any value from $\{1, 2, \ldots, C\}$. We use multinomial logistic regression mainly because it can deal with a  large number of classes  and provide posterior probability assignments, needed by the application task (i.e., the speech recognition systems, in order to combine with components such as language models).

\section{Our Approaches}
\label{sApproach}

To scale up kernel methods, we address two challenges:  (1) how to train large-scale models in the form of eq.~(\ref{eMLR}); (2) how to learn optimal kernel functions adapted to the data. We tackle the former with a parallel optimization algorithm and the latter by extending the construction of random features initially proposed in ~\citep{rahimi07random}.

\subsection{Parallel optimization for large-scale kernel models}
\label{sParallel}

While random features and weighted sum of random kitchen sinks have been investigated before, there are few reported cases of  scaling up to the problems commonly seen in automatic speech recognition and other domains. For example, in our empirical studies of acoustic modeling (cf. section~\ref{sASR}), the number of classes is $C=1000$ and we often use more than $D = 100,000$ random features to compose $\hat{\vphi}(\vx)$.  Thus, the linear model eq.~(\ref{eMLR}) has a large number of parameters (about $C \times D = 10^8$). 

We have developed two major strategies to overcome this challenge. First, we leverage the observation that fitting multinomial logistic regression is a convex optimization problem and adopt the method of stochastic averaged gradient (SAG) for its faster convergence, both theoretically and empirically, over stochastic gradient descent (SGD)~\citep{roux12sag}. Note that while SGD is widely applicable to both convex and non-convex optimization problems,  SAG is specifically designed for convex optimization and thus well-suited to our learning setting.

Secondly, we leverage the property that random projections are just \emph{random} -- that is, given a $D$-dimensional $\hat{\vphi}(\vx)$, any random subset of it is still random. Our idea is then to train a model on each subset of features \emph{in parallel} and then assemble them together to form a large model.

Specifically, for large $D$ (say $\ge 100,000$), we partition $D$ into $B$ blocks $\hat{\vphi}_b(\vx)$ with each block having a size of $D_0$ (say $ = 25,000$). Note that each block corresponds to a different set of random projections sampled from the density $p(\vomega)$. 
We train $B$ multinomial logistic regression models and obtain $B$ sets of parameters for each class, ie., $\{\vw_c^{b}, c=1, 2,\ldots, C\}_{b=1}^B$. To assemble them, we combine in the spirit of \emph{geometric mean} of the probabilities (or arithmetic mean of the $\log$ probabilities)
\begin{equation}
\begin{aligned}
p( y = c|\vx)  & \propto \exp\left(\frac{1}{B}\sum_{b=1}^B \hat{\vphi}_b(\vx)\T \vw_c^b\right)\\
 \\
 & = \sqrt[^B]{\prod_b \exp\left(\hat{\vphi}_b(\vx)_b\T \vw_c^b \right)}
 \end{aligned}
\label{eAssembled}
\end{equation}
Note that this assembled model can be seen as a $D$-dimensional model with parameters of $\{\frac{1}{B}\vw_c^{b}, c=1, 2,\ldots, C\}_{b=1}^B$.

We sketch the main argument for the validity of this parallel training procedure, leaving a rigorous proof for future work. The parameters of the weighted sum of random kitchen sink converges in $O(1/\sqrt{D})$ to the true risk minimizer~\citep{rahimi08kitchen}. Thus, for each model of size $D_0$, the pre-softmax activations (i.e., the logits) converge in $O(1/\sqrt{D_0})$. For $B$ such models, the arithmetic mean of logits  converge in $O(1/(\sqrt{B}\sqrt{D_0}))$ thus matching up the rate for a $D$-dimensional model.  Our extensive empirical studies have supported our argument --- in virtually all training settings, the assembled models cannot be improved further, attaining the optimum of the corresponding $D$-dimensional model.

\subsection{Learning kernel features}
\label{sMKL}

Another advantage of using kernels is to sidestep the problem of feature engineering, i.e., how to select the best basis functions for a task at hand. Essentially, determining what kernel function to use implicitly specifies the basis functions. But then the question becomes: how to select the best kernel function?

One popular paradigm to address the latter problem is multiple kernel learning (MKL)~\citep{lanckriet04mkl,bach04mkl,cortes09nonlinear,kloft11lp}. That is, starting from a collection of base kernels, the algorithm identifies the best subset of them and combines them together to best adapt to the training data, analogous to designing the best features according to the data.

In the following,  we show how a few common  MKL ideas can benefit from the previously described large-scale learning techniques (cf. section~\ref{sRandom}). While many MKL algorithms are formulated with kernel matrices (and thus are not easily scalable to large problems), we demonstrate how they can be efficiently implemented with the general recipe of random feature approximation.  
Among them, we show an interesting and novel result on combining kernels with Hadamard products, where the random feature approximation is especially computationally advantageous. 

In our empirical studies (detailed in Supplementary Material), we will show that MKL improves methods using a single kernel, and eventually approaches the performance of deep neural networks.  Thus, MKL presents an effective and computational tractable alternatives to DNNs, even for large-scale problems.

\paragraph{Additive Kernel Combination}

Given a collection of base kernels $\{k_i(\cdot, \cdot), i=1, 2, \ldots, L\}$, their non-negative combination
\begin{equation}
k(\cdot, \cdot) = \sum_i \alpha_i k_i(\cdot, \cdot)
\end{equation}
is also a kernel function, provided $\alpha_i \ge 0$ for any $i$. 

Suppose each kernel $k_i(\cdot, \dot)$ is approximated with a $D$-dimensional random feature vector $\hat{\vphi}_i(\cdot)$, as in eq.~(\ref{eApprox}). Then, given the linearity of the combination, the kernel function $k(\cdot, \cdot)$ can be approximated by
\begin{equation}
\label{eAKC}
k(\cdot, \cdot) \approx \sum_i \alpha_i \hat{\vphi}_i(\cdot)\T\hat{\vphi}_i(\cdot) = \hat{\vphi}(\cdot)\T\hat{\vphi}(\cdot)
\end{equation}
where $\hat{\vphi}(\cdot)$ is just the concatenation of the $\sqrt{\alpha_i}$-scaled $\hat{\vphi}_i(\cdot)$. Note that the dimensionality of $\hat{\vphi}(\cdot)$ would be $L \times D$.

There are several ways to exploit this approximation. The first way is to straightforwardly plug $\hat{\vphi}(\cdot)$ into the multinomial logistic regression eq.~(\ref{eMLR}) and optimize over $L\times D$ features. The second way is more scalable. For each $\hat{\vphi}_i(\cdot)$, we learn an optimal model with parameters $\vw_c^i$ for each class $c$. We then learn a set of combination coefficients $\alpha_i$ by optimizing the likelihood model
\begin{equation}
p( y = c |\vx) \propto \exp\left(\sum_i \sqrt{\alpha_i}  \hat{\vphi}_i(\vx)\T\vw_c^i\right)
\end{equation}
while holding the other parameters fixed. This is a convex optimization with (presumably) a small set of parameters. 

While the first approach is more general, however, empirically, we do not observe a strong difference and have adopted the second approach for its  scalability.

\paragraph{Multiplicative Kernel Combination}  

Kernels can also be multiplicatively combined from base kernels:
\begin{equation}
k(\cdot, \cdot) = \prod_{i=1}^L k_i (\cdot, \cdot)
\end{equation}
Note that this is a highly nonlinear combination~\citep{cortes09nonlinear}. Unlike the additive combination, to approximate the multiplicative combination of kernels, there does not exist a simple form (such as concatenating) of composing with the approximate features of individual kernels. Nonetheless,
We have proved the following theorem as a way to constructing the approximate features for $k(\cdot, \cdot)$ efficiently.
\begin{theorem}
\label{thMulti}
Suppose all $k_i(\cdot, \cdot)$ are translation-invariant kernels such that
\begin{equation}
k_i(\vx - \vz)  = \int_{R^d} p_i(\vomega) e^{j\vomega\T(\vx-\vz)}\, d\vomega 
\end{equation}
Then $k(\cdot, \cdot)$ is also translation-invariant such that
\begin{equation}
k(\vx - \vz)  = \int_{R^d} p(\vomega) e^{j\vomega\T(\vx-\vz)}\, d\vomega 
\end{equation}
where the probability measure $p(\vomega)$ is given by the convolution of all $p_i(\vomega)$
\begin{equation}
p(\vomega) = p_1(\vomega) \ast p_2(\vomega)  \ast \cdots \ast p_L(\vomega)
\end{equation}
Moreover, let $\vomega_i \sim p_i(\vomega)$ be a random variable drawn from the corresponding distribution, then 
\begin{equation}
\vomega = \sum_i \vomega_i \sim p(\vomega)
\end{equation}
Namely, to approximate $k(\cdot, \cdot)$, one needs only to draw random variables from each individual component kernel's corresponding density, and use the sum of those variables to compute random features.
\end{theorem}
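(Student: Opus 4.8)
The plan is to reduce the statement to the case $L=2$ by induction and then invoke two classical facts: the Fourier transform turns pointwise products into convolutions, and the density of a sum of independent random vectors is the convolution of their densities — which are really the same computation read two ways. For $L=2$ I would start from the spectral representations of $k_1$ and $k_2$, form the product, and write
\begin{equation*}
k_1(\vdelta)\,k_2(\vdelta) = \int_{\R^d}\!\!\int_{\R^d} p_1(\vomega_1)\,p_2(\vomega_2)\, e^{j(\vomega_1+\vomega_2)\T\vdelta}\, d\vomega_1\, d\vomega_2 .
\end{equation*}
Since $p_1,p_2$ are probability densities and the integrand has modulus $p_1(\vomega_1)p_2(\vomega_2)$, whose double integral is $1$, Fubini applies; substituting $\vomega=\vomega_1+\vomega_2$ and integrating out $\vomega_1$ turns the inner integral into $(p_1\ast p_2)(\vomega)$, so that $k_1 k_2=\int_{\R^d}(p_1\ast p_2)(\vomega)\,e^{j\vomega\T\vdelta}\,d\vomega$. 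Hence $k=k_1k_2$ is translation-invariant and equals the Fourier transform of the non-negative measure $p_1\ast p_2$; by the uniqueness half of Bochner's theorem (the Fourier transform is injective on finite non-negative measures), $p=p_1\ast p_2$ is \emph{the} spectral density of $k$.

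For the sampling claim I would observe that if $\vomega_1\sim p_1$ and $\vomega_2\sim p_2$ are drawn \emph{independently}, the density of $\vomega_1+\vomega_2$ is exactly $p_1\ast p_2=p$ by the standard change-of-variables formula for sums of independent random vectors — literally the same integral as above, now interpreted probabilistically. The general case then follows by induction on $L$: group $k=\big(\prod_{i=1}^{L-1}k_i\big)\cdot k_L$, note that the partial product is again continuous and positive definite (products of positive-definite functions are positive definite), apply the induction hypothesis to get its spectral density $p_1\ast\cdots\ast p_{L-1}$, and combine with $k_L$ using the $L=2$ case; the corresponding statement for $\sum_{i=1}^L\vomega_i$ is assembled one independent summand at a time, using associativity and commutativity of convolution so that $p_1\ast\cdots\ast p_L$ is unambiguous.

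The routine parts — Fubini, the linear substitution, telescoping the induction — I would not belabor. The points deserving a sentence of care are: (i) checking $k=\prod_i k_i$ is continuous and positive definite so Bochner's theorem can be invoked to pin down its spectral measure; (ii) verifying $p_1\ast\cdots\ast p_L$ is a bona fide probability density, which reduces to $\int p = k(\vzero)=\prod_i k_i(\vzero)$, so one should assume — as holds for the Gaussian and Laplacian kernels of eq.~(\ref{eGauLap}) — that each $k_i$ is normalized with $k_i(\vzero)=1$; and (iii) the injectivity/uniqueness direction of Bochner's theorem, which is what upgrades ``$p_1\ast\cdots\ast p_L$ gives a valid spectral representation'' to ``$p_1\ast\cdots\ast p_L$ \emph{is} the spectral density.'' None of these is a genuine obstacle, but (iii) is the conceptual crux: the theorem is essentially the statement that the bijection between continuous positive-definite functions and finite non-negative measures intertwines pointwise multiplication with convolution.
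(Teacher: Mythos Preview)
Your proposal is correct and follows essentially the same route as the paper: write the product of the Fourier representations, combine exponents into $e^{j(\sum_i\vomega_i)\T\vdelta}$, change variables to $\tilde{\vomega}=\sum_i\vomega_i$, and recognize the remaining integral as the $L$-fold convolution $p_1\ast\cdots\ast p_L$. The only differences are organizational --- the paper handles all $L$ factors in a single computation rather than via induction on the $L=2$ case --- and that you add justifications (Fubini, the normalization $k_i(\vzero)=1$, uniqueness of the spectral measure) that the paper leaves implicit.
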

The proof of the theorem is in the Suppl.. We note that $\vomega$ and $\vomega_i$ have the same dimensionality. Thus, the number of approximating features is independent of the number of kernels, leading to a computational advantage over additive combination.

\paragraph{Kernel composition} 

Kernels can also be composited. Specifically, if $k_2(\vx, \vz)$ is a kernel function that depends on only the inner products of its arguments, then $k = k_2 \circ k_1$ is also a kernel function. A concrete example is when $k_2$ is the Gaussian RBF kernel  and $k_1(\vx, \vz) = \vphi_1(\vx)\T\vphi_1(\vz)$ for some mapping $\vphi_1(\cdot)$
\begin{equation*}
\begin{aligned}
k(\vx, \vz)  & 
= \exp\{-\| \vphi_1(\vx) - \vphi_1(\vz)\|_2^2/\sigma^2\}\\
& = \exp\{ - [k_1(\vx, \vx)+ k_1(\vz, \vz)- 2 k_1(\vx, \vz)]/\sigma^2\}
\end{aligned}
\end{equation*}

If we approximate both $k_1$ and $k_2$ using the random feature approximation of eq.~(\ref{eApprox}), the composition would be (graphically) equivalent to the following mapping,
\begin{equation}
\vx \xrightarrow{\vomega \sim p_1(\vomega)} \hat{\vphi}_1(\vx) \xrightarrow{\vomega \sim p_2(\vomega)} \hat{\vphi}_2(\hat{\vphi}_1(\vx))
\end{equation}
namely, a one-hidden-layer  neural networks with the weight parameters in the layers being completely random. As before, the result of the composite mapping $\hat{\vphi}_2 \circ \hat{\vphi}_1$ can be used in any classifier as  input features. 

We also generalize this operation to introduce a linear projection to reduce dimensionality, serving as information bottleneck:  $\hat{\vphi}_2 \circ \vP \circ \hat{\vphi}_1$.  We experimented on two choices.

First, $\vP$ performs PCA (using the sample covariance matrix) on $\hat{\vphi}_1(\vx)$. Note that this implies $\vP \circ \hat{\vphi}_1$ is an approximate kernel PCA on the original feature space $\vx$, using the kernel $k_1$.
Secondly, $\vP$ performs \emph{supervised} dimensionality reduction. One simple choice is to implement Fisher discriminant analysis (FDA) on $\hat{\vphi}_1(\vx)$, which is equivalent to kernel (FDA) on $\vx$. In our experiments, we have used a different procedure  in a similar spirit. Specifically, 
In particular, we first use $\hat{\vphi}_1(\vx)$ as input features to build a multinomial logistic regression to predict its labels. We then perform PCA on the $\log$-posterior probabilities.   Our choice here is largely due to the consideration of re-using the computations as we often need to estimate the performance of $k_1(\cdot, \cdot)$ alone, thus the multinomial classifier built with $k_1(\cdot, \cdot)$ is readily usable.

\section{EXPERIMENTAL RESULTS}
\label{sResults}

We validate our approaches of scaling up kernel methods on challenging problems in computer vision and automatic speech recognition (ASR). We conduct extensive empirical studies comparing kernel methods to deep neural networks (DNNs), which perform well in computer vision, and are state-of-the-art in ASR. We show that kernel methods attain similarly competitive performance as DNNs -- details in section~\ref{sImage} and ~\ref{sASR} (as well as Supplementary Material).

What can we learn from two very different, yet equally competitive, learning models? We report our initial findings on this question (section~\ref{sComparison}). We show that kernel methods and DNNs learn different yet complementary representation of the data. As such, a direct application of this observation is to combine them to obtain better models than either independently. 

\subsection{General setup}

For all kernel-based models, we tune only three hyperparameters: the bandwidths for Gaussian or Laplacian kernels, the number of random projections, and the step size of the (convex) optimization procedure (as adjusting it has a similar effect as early-stopping).

For all DNNs, we tune hyperparameters related to both the architecture and the optimization. This includes the number of layers, the number of hidden units in each layer, the learning rate, the rate decay, the momentum, regularization, etc. We also use unsupervised pre-training and tune hyperparameters for that phase too.

Details about tuning those hyperparameters are described in the Supplementary Material as they are often dataset-specific.   In short,  model selection for kernel models has significantly lower computational cost. We give concrete measures to support this observation in section~\ref{sTiming}.

\begin{table}
\small
\centering
\caption{Handwritten digit recognition error rates (\%)}
\label{tMNISTSummary}
\vspace{0.5em}
\begin{tabular}{lcccc }
\hline
Model     & \multicolumn{2}{c}{kernel} & \multicolumn{2}{c}{DNN} \\ \hline
Augment training data & no  & yes & no  & yes\\ \hline
On validation &  0.97 & 0.77 &  0.71 &0.62  \\ \hline
On test  &   1.09 &  {\color{red}0.85} & {\color{red}0.69} & 0.77 \\ \hline
\end{tabular}
\end{table}
\subsection{Computer vision tasks}
\label{sImage}
We experiment on two problems:  handwritten digit recognition and object recognition.

\paragraph{Handwritten digit recognition}  We extract a dataset MNIST-6.7M, from the dataset MNIST-8M~\citep{loosli06selective}. MNIST-8M is a transformed version of the original MNIST dataset~\citep{lecun1998mnist}. Concretely, we randomly select 50,000 out of 60,000 images from  the MNIST's training set and extracted the corresponding samples (the original as well as transformed/distorted ones) in MNIST-8M, resulting in a total of 6.75 million samples in total, as our training set.  We use the remaining 10,000 images from the original training set as a validation set --- we purposely avoid using any transformed versions of those 10,000 images as a validation dataset to avoid potential overfitting. We report test error rate on the standard 10,000 MNIST test set.   

We also experimented with a data augmentation trick to  increase  the number of training samples \emph{during training}. Whenever we encounter a training sample, we  corrupt it with masking noise (randomly flipping 1 to 0 in the binary image).  We crudely tune the mask-out rates, which are either 0.1, 0.2 or 0.3.

Table~\ref{tMNISTSummary} compares the performance of the best single-kernel based classifier to that of the best DNN.  The kernel classifier uses Gaussian kernel, and 150,000 random projections.  The best DNN has 4 hidden layers with 1000, 2000, 2000, and 2000 hidden units respectively. 

The difference between the kernel model and the DNN is small -- about 16 (out of 10,000) misclassified images. Interestingly, on the test data, the kernel model benefits from the data augmentation trick while the DNN does not.  Possibly, the DNN overfits to the validation dataset.

\begin{table}
\small
\centering
\caption{Object recognition error (\%)}
\label{tCIFARSummary}
\vspace{0.5em}
\begin{tabular}{lcccc }
\hline
Model     & \multicolumn{2}{c}{kernel} & \multicolumn{2}{c}{DNN} \\ \hline
Augment training data & no  & yes & no  & yes\\ \hline
On validation  &  43.2 & 41.4 &  42.9 & 43.2  \\ \hline
On test &   43.9 &  {\color{red}42.2} &  {\color{red}43.3} & 44.0 \\ \hline
\end{tabular}
\end{table}

\paragraph{Object recognition} For this task, we experiment on the database CIFAR-10~\cite{krizhevsky2009learning}. The dataset contains 50,000 training samples and 10,000 test samples. Each sample is a RGB image of $32\times 32$ pixels in one of 10 object categories.  We randomly picked 10,000 images from the training set for validation, keeping the remaining 40,000 images for training. We did not perform any preprocessing to the images as we want to relate our findings to previously published results which often do not preprocess data or do not report all specific details in preprocessing.  We also experimented with an augmented version of the dataset, by injecting Gaussian noise to images during training.

Table~\ref{tCIFARSummary} compares the performance of the best single-kernel based classifier to  that of the best DNN.  The kernel classifier uses Gaussian kernel, and 4,000,000 random projections.  The best DNN has 3 hidden layers with 4000, 2000, 2000 hidden units respectively. 

The best kernel model performs slightly better than the DNN.  They both outperform previously reported DNN results on this dataset, whose error rates are between 44.4\% and 48.5\%~\citep{rifai2011contractive,krizhevsky2009learning,raiko2012deep}.  

Convolutional neural nets (CNNs) can significantly outperform DNNs on this dataset. However, we do not compare to CNNs  as our kernel models (as well as DNNs) do not construct feature extractions with prior knowledge, while  CNNs are  designed especially for object recognition.

\subsection{Automatic speech recognition (ASR)}
\label{sASR}
\paragraph{Task and evaluation metric} Deep neural nets (DNNs) have been very successfully applied to ASR. There, DNNs perform the task of acoustic modeling. Acoustic modeling is analogous to the conventional multi-class classification, that is, to learn a predictive model to assign phoneme context-dependent phoneme state labels $y$ to short segments of speech, called frames, represented as acoustic feature vectors $\vx$. Acoustic feature vectors are extracted from frames and their context windows (i.e., neighboring frames in temporal proximity). 

Analogously, kernel-based multinomial logistic regression models, as described in section~\ref{sRKS}, are also used as acoustic models and compared to DNNs.

Acoustic models are often evaluated in conjunction with other components of ASR systems. In particular, speech recognition is inherently a sequence recognition problem. Thus,  perplexity and classification accuracies --- commonly used for conventional multi-class classification problems ---  provide only a proxy (and intermediate goals) to the sequence recognition error. To measure the latter, a full ASR pipeline is necessary where the posterior probabilities of the phoneme states are combined with the  probabilities of the language models (of the interested linguistic units such as words) to yield the most probable sequence of those units. A best alignment with the ground-truth sequence is computed, yielding token error rates (\ter).

Given the inherent complexity, in what follows, we summarize the empirical studies of applying both paradigms to the acoustic modeling task. We will report \ter on two different languages. Details are presented in the Supplementary Material, including comparisons in terms of both perplexity and accuracy for different models.  We begin by describing the datasets, followed by a brief description of various kernel and DNN models we have experimented with.

\paragraph{Datasets}  We use two datasets:  the IARPA Babel Program
Cantonese (IARPA-babel101-v0.4c) and Bengali (IARPA-babel103b-v0.4b)
limited language packs.  Each pack contains a 20-hour training,  and a 20-hour test sets. 
We designate about 10\% of the training data as a held-out set to be used for model selection and tuning.

We follow the same procedure to extract acoustic features from raw audio data as in the previous work using DNNs for ASR~\citep{kingsbury13high}.  In particular, we have used IBM's proprietary system Attila which is adapted for the above-mentioned Babel language packs. The acoustic features are 360-dimensional real-valued dense vectors. There are 1000 (non-overlapping) phoneme context-dependent state labels for each language pack. For Cantonese, there are about 7.5 million data points for training, 0.9 million for held-out, and 7.2 million for test, and on Bengali,  7.7 million for training, 1.0 million for held-out and  7.1 million for test. For Bengali, the \ter metric  is the word-error-rate (WER) and for Cantonese, it is character-error-rate (CER).


\begin{table}
\small
\centering
\caption{Best Token Error Rates on Test Set ($\%$)}
\label{tTER}
\vspace{0.5em}
\begin{tabular}{ccc}\hline
Model  & Bengali & Cantonese \\ \hline
\ibm & 70.4  	&  	67.3	 \\ \hline
\rbm & {\color{red}69.5} 	& 	66.3 \\ \hline
\textbf{best kernel model} &  70.0	& 	{\color{red}65.7}	\\ \hline
\end{tabular}
\vspace{-1em}
\end{table}

\paragraph{Various of models being experimented} 
IBM's Attila ASR system has a DNN acoustic model that contains five hidden-layers, each of which contains 1,024 units with logistic nonlinearities.  We refer to this system as \ibm. We have developed another version of DNN, following the original Restricted Boltzman Machine (RBM)-based training  procedure for learning DNNs\citep{hinton06dbn}. Specifically, the pre-training is unsupervised.  We have trained DNNs with 1, 2, 3 and 4 hidden layers, and 500, 1000, and 2000 hidden units per layer (thus, 12 architectures per language).   We refer to this system as \rbm. 

For kernel-based acoustic models, we used Gaussian RBF, Laplacian kernels or some forms of combinations. The only hyper-parameter there to tune is the kernel bandwidth, which ranges from 0.3 - 5 median of the pairwise distances in the data (Typically, the median works well.), the number of random projections ranging from 2,000 to 400,000 (though a stable performance is often observed at 25,000 or above). For training with very large number of features, we used the parallel training procedure, described in section~\ref{sParallel}. For optimization, we used the stochastic average gradient and tune the step size loosely from 4 values $\{10^{-4}, 10^{-3}, 10^{-2}, 10^{-1}\}$.

Details about these systems are in Supplementary Material.

\paragraph{Results}
Table~\ref{tTER} reports the best performing models measured in \textsf{TER}. The RBM-trained DNN (\rbm ), which has 4 hidden layers and 2000 units in each layer, performs the best on Bengali. But our best kernel model, which uses Gaussian RBF kernel and 150,000 -- 200,000 random projections, performs the best on Cantonese.  Both perform better than IBM's DNN. On Cantonese, the improvement of the kernel model over \ibm is noticeably substantial ($1.6\%$ reduction in absolute).

\subsection{Computational efficiency}
\label{sTiming}

In contrast to DNNs, kernel models can be more efficiently developed. We illustrate this on two aspects: the computational cost of training a single model and the cost of model selection (i.e., hyperparameter tuning) 

\paragraph{Cost of training a single model} While the amount of training time depends on several factors, including the volume and the dimensionality of the dataset, the choice of hyperparameters and their effect on convergence, implementation details, etc. We give a rough picture after controlling those extraneous factors as much as possible.

We implement both methods with highly optimized Matlab codes (comparable to our CUDA C implementation) and utilize a single GPU (NVidia Tesla K20m). The timing results reported below are obtained from training acoustic models on the Bengali language dataset.

For a kernel model with 25,000 random projections (25 million model parameters), convergence is reached in less than 20 epochs, with an average of 15 minutes per epoch. In contrast, a competitive deep model with four hidden layers of 2,000 hidden units (15 million parameters), if initialized with pretrained parameters, reaches convergence in roughly 10 epochs, with an average of 28 minutes per epoch. (The pretraining requires additional 12 hours.)

Thus, the  training time for a single kernel model is about the same as that for a DNN. This holds for a range of  datasets and configurations of hyperparameters.

\paragraph{Cost of model selection} The number of kernel models to be tuned, is significantly (at least one order in magnitude ) less than DNNs.  There are only two hyperparameters to search when selecting kernel models: the kernel bandwidth and the learning rate. Generally, the higher the number of random projections, the better  the performance is. 

For DNNs, the number of hyperparameters needed to be tuned is substantially more. As previously mentioned, in our experiments, we tuned those related to the network architecture and optimization procedure, for both pre-training and fine-tuning. As such, it is fairly common to select the best DNN  among hundreds to thousands of them. 

Combining both factors, kernel models are especially appealing when they are used to tackle new problem settings where there is only a weak knowledge about what the optimal hyperparameters are or what the proper ranges for those parameters are. To develop DNNs in this scenario, one would be forced to combinatorially adjust many knobs while kernel approaches are simple and straightforward.
\begin{table}
\small
\centering
\caption{Combining best performing kernel and DNN models}
\label{tCombine}
\vspace{0.5em}
\begin{tabular}{ccccc}
\hline
Dataset & MNIST-6.7 & CIFAR-10 & Bengali & Cantonese\\ \hline
Best single & {\color{red}0.69} & {\color{blue}42.2} & {\color{red}69.5} & {\color{blue}65.7}\\ \hline
Combined & \textbf{0.61} & \textbf{40.3} & \textbf{69.1} & \textbf{64.9}\\ \hline
\end{tabular}
\end{table}
\subsection{Do kernel and deep models learn the same thing?}
\label{sComparison}

\begin{figure}
\centering
\includegraphics[width=0.45\columnwidth]{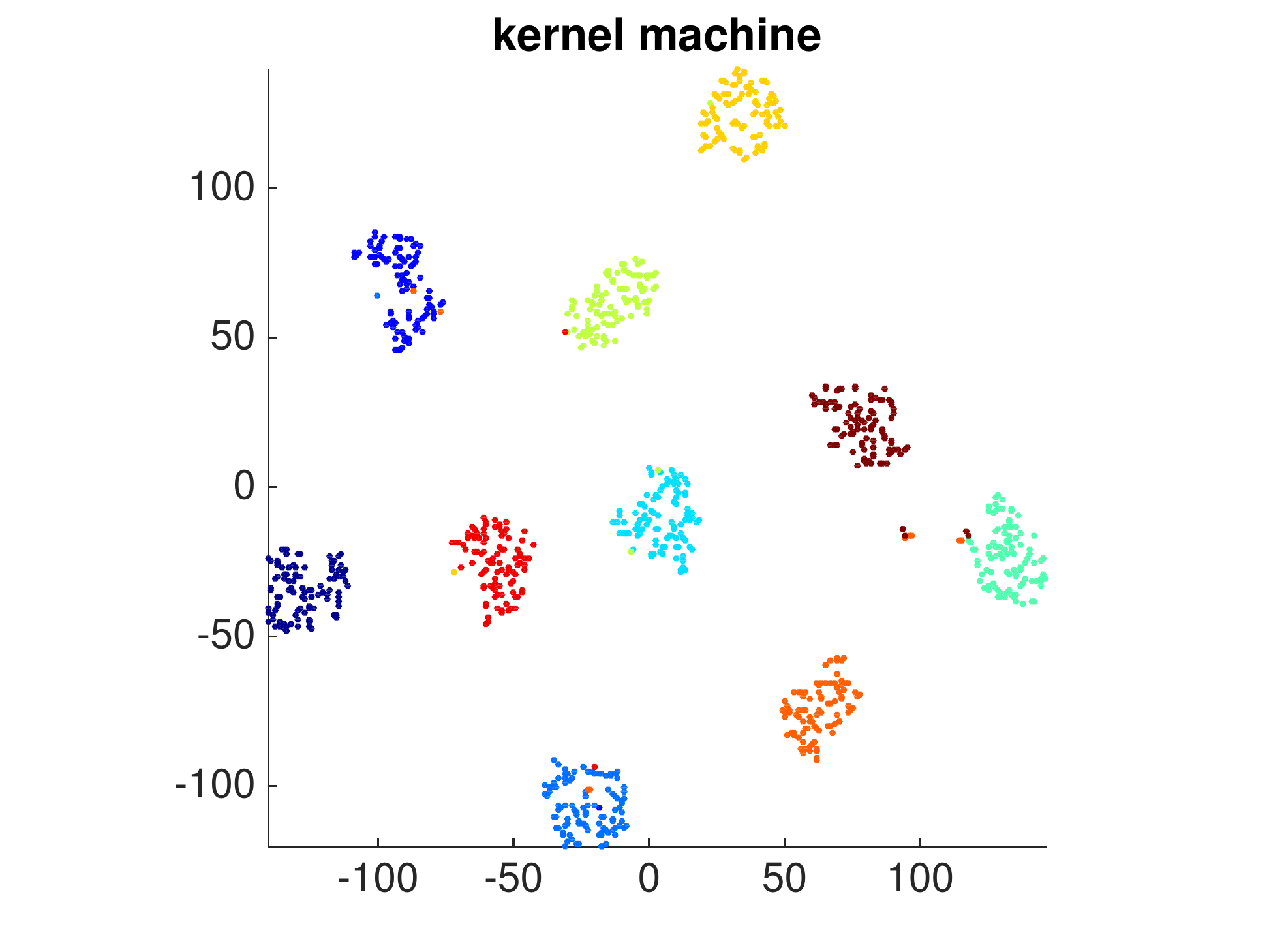}\includegraphics[width=0.45\columnwidth]{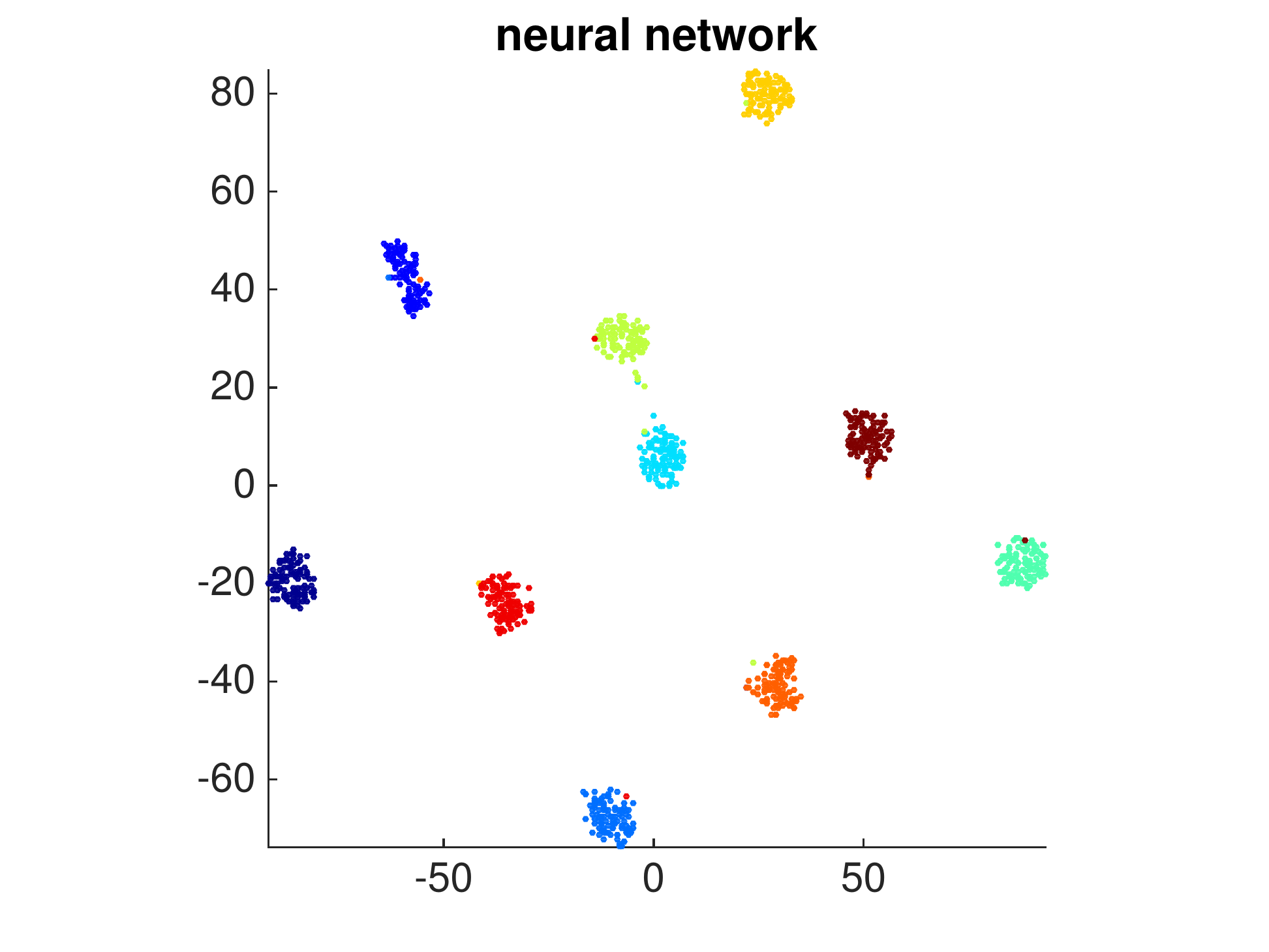}

\caption{t-SNE embeddings of data representation learnt by kernel (Left) and by DNN (Right). The relative positioning of those well-separated clusters are different, suggesting that these two models learn to represent data in quite different ways (Kernel embedding is computed using DNN's embedding as initialization to avoid the randomness in t-SNE.)}
\label{fMNIST}
\end{figure}

Given their matching performances, do kernel and DNN models learn the same knowledge from data?

We report in the following  our initial findings. We first combine the best performing models from each paradigm --- we use weighted sum of pre-softmax activations (i.e., logits). Table~\ref{tCombine} summarizes those results across 4 tasks we have studied in the previous sections. In the row of ``best single'', blue color indicates the number is from a kernel model while red from a DNN. Clearly, combining the two improves either one independently.

These improvements suggest that despite being close in error rates, the two models are still different enough. We gain more intuitive understanding by visualizing what are being learnt by each model. To this end, we project each model's pre-softmax activations  onto the 2-D plane with t-SNE. 

Fig.~\ref{fMNIST} contrasts the embeddings for 1000 samples from MNIST-6.7M's test set. Each data point is a dot and the color encodes its label. Due to the low classification error rates, it is not surprising to find that there are 10 well separated clusters, one for each of the 10 digits. However, the relative positioning of those clusters is noticeably different between DNN and kernel. It is not clear the embeddings can be transformed into each other with linear transformations. This seems to suggest each method has its unique way of \emph{nonlinearly} embedding data. Elucidating more precisely is our future research direction.

\section{Conclusion}
\label{sDiscuss}

We propose techniques to scale up kernel methods to large learning problems that are commonly found in speech recognition and computer vision.  We have shown that the performance of those large kernel models approaches or surpasses their deep neural networks counterparts, which have been regarded as the state-of-the-art. Future direction of our research include understanding the difference of these two camps of methods, for instance, in learning new representations of data.

\section*{Acknowledgement}

F. S. is grateful to Lawrence K. Saul (UCSD), L\'{e}on Bottou (Microsoft Research),  Alex Smola (CMU), and Chris J. C. Burges (Microsoft Research) for many fruitful discussions and pointers to relevant work. 

Computation for the work described in this paper was partially supported by the University of Southern CaliforniaÕs Center for High-Performance Computing (http://hpc.usc.edu). 

This work was supported by the Intelligence Advanced
Research Projects Activity (IARPA) via Department of Defense
U.S. Army Research Laboratory (DoD / ARL) contract
number W911NF-12-C-0012. The U.S. Government is authorized
to reproduce and distribute reprints for Governmental
purposes notwithstanding any copyright annotation thereon.
The views and conclusions contained herein are
those of the authors and should not be interpreted as necessarily
representing the official policies or endorsements, either
expressed or implied, of IARPA, DoD/ARL, or the U.S.
Government.

Additionally, A. B. G. is partially supported by a USC Provost Graduate Fellowship. F. S. is partially supported by a NSF IIS-1065243, a Google Research Award, an Alfred. P. Sloan Research Fellowship and an ARO YIP Award (W911NF-12-1-0241).

\small{
\bibliographystyle{plainnat}
\bibliography{global_strings_learning,main,crossref}
}
\appendix
\section{Proof of the Theorem 2}
\setcounter{thm}{1}
\begin{thm}
Suppose all $k_i(\cdot, \cdot)$ are translation-invariant kernels such that
\begin{equation*}
k_i(\vx - \vz)  = \int_{R^d} p_i(\vomega) e^{j\vomega\T(\vx-\vz)}\, d\vomega 
\end{equation*}
Then $k(\cdot, \cdot)  = \prod_{i=1}^L k_i (\cdot, \cdot)$ is also translation-invariant such that
\begin{equation*}
k(\vx - \vz)  = \int_{R^d} p(\vomega) e^{j\vomega\T(\vx-\vz)}\, d\vomega 
\end{equation*}
where the probability measure $p(\vomega)$ is given by the convolution of $p_i(\vomega)$s
\begin{equation*}
p(\vomega) = p_1(\vomega) \ast p_2(\vomega)  \ast \cdots \ast p_L(\vomega)
\end{equation*}
Moreover, let $\vomega_i \sim p_i(\vomega)$ be a random variable drawn from the corresponding distribution, then 
\begin{equation*}
\vomega = \sum_i \vomega_i \sim p(\vomega)
\end{equation*}
Namely, to approximate $k(\cdot, \cdot)$, one needs only to draw random variables from each individual component kernel's corresponding density, and use the sum of those variables to compute random features.
\end{thm}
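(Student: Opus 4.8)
The plan is to verify directly that the convolution $p = p_1 \ast \cdots \ast p_L$ is a probability density whose inverse Fourier transform is exactly $k = \prod_i k_i$; the sampling claim then follows from the elementary fact that the density of a sum of independent random vectors is the convolution of their densities. By associativity of convolution and of multiplication it suffices to treat $L = 2$ and then induct on $L$, since $\prod_{i=1}^{L} k_i = \bigl(\prod_{i=1}^{L-1} k_i\bigr)\cdot k_L$. For $L = 2$, set $p(\vomega) = \int_{\R^d} p_1(\vu)\, p_2(\vomega - \vu)\, d\vu$; non-negativity of $p$ is immediate from that of $p_1, p_2$, and by Tonelli's theorem $\int_{\R^d} p(\vomega)\, d\vomega = \bigl(\int p_1\bigr)\bigl(\int p_2\bigr) = 1$, so $p$ is a genuine probability density.

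Next I would compute the inverse Fourier transform of $p$. Writing $\vdelta = \vx - \vz$, interchanging the order of integration (justified since $p_1, p_2$ are finite measures and $|e^{j\vomega\T\vdelta}| = 1$) and substituting $\vv = \vomega - \vu$ gives
\begin{equation*}
\int_{\R^d} p(\vomega)\, e^{j\vomega\T\vdelta}\, d\vomega = \int_{\R^d}\!\!\int_{\R^d} p_1(\vu)\, p_2(\vv)\, e^{j(\vu + \vv)\T\vdelta}\, d\vu\, d\vv = \Bigl(\int_{\R^d} p_1(\vu)\, e^{j\vu\T\vdelta}\, d\vu\Bigr)\Bigl(\int_{\R^d} p_2(\vv)\, e^{j\vv\T\vdelta}\, d\vv\Bigr) = k_1(\vdelta)\, k_2(\vdelta).
\end{equation*}
Thus $k = k_1 k_2$ is translation-invariant with spectral density $p$, and since $p \ge 0$ is a finite measure, Bochner's theorem confirms $k$ is positive definite. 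Iterating over the $L$ factors yields the general statement, with $p = p_1 \ast \cdots \ast p_L$.

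For the sampling statement, let $\vomega_1, \dots, \vomega_L$ be drawn independently with $\vomega_i \sim p_i$. The density of the sum of two independent random vectors is the convolution of their densities; applying this inductively, the density of $\vomega = \sum_{i=1}^{L} \vomega_i$ is $p_1 \ast \cdots \ast p_L = p$. Hence sampling $\vomega_i \sim p_i$ for each $i$ and forming $\sum_i \vomega_i$ produces a draw from $p$, which is exactly what the random-feature construction of eq.~(\ref{eApprox})--(\ref{eRandomFeature}) requires for the product kernel $k$.

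The only genuine subtlety is the Fubini/Tonelli interchange in the Fourier computation, which is routine here because every $p_i$ is a probability density (hence a finite measure) and the exponential factors are bounded by $1$. If one wished to state the theorem for arbitrary spectral measures rather than densities --- as Bochner's theorem permits --- one would replace ``density'' by ``measure'' throughout and realize $p$ as the pushforward of the product measure $p_1 \otimes \cdots \otimes p_L$ under the map $(\vomega_1, \dots, \vomega_L) \mapsto \sum_i \vomega_i$; the argument is otherwise unchanged.
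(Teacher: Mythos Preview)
Your proof is correct and follows essentially the same route as the paper's: both arguments boil down to the Fourier--convolution identity, showing that the inverse Fourier transform of $p_1\ast\cdots\ast p_L$ equals $\prod_i k_i(\vdelta)$, and then invoking the fact that a sum of independent random vectors has density equal to the convolution of the component densities. The only cosmetic differences are that you reduce to $L=2$ and induct (the paper does the general $L$ in one change of variables $\tilde\vomega=\sum_i\vomega_i$), and you are more careful to verify that $p$ is a probability density and to justify the Fubini/Tonelli interchange---points the paper leaves implicit.
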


\begin{proof}
Denote $\vDD = \vx-\vz$.

For translation-invariant kernel, we have
\begin{eqnarray*}
\tiny
\setlength{\arraycolsep}{1em}
k_i(\vx,\vz) = k_i(\vDD) & = & 
\int_{\vomega_i} p_i(\vomega_i)e^{j\vomega_i\T \vDD} \mathrm{d} \vomega_i
\end{eqnarray*}
The product of the kernels is,

\begin{eqnarray*}
k(\cdot, \cdot)  =  
\prod_{i=1}^L k_i (\cdot, \cdot)  = 
\prod_{i=1}^L k_i (\vDD) = k(\vDD),
\end{eqnarray*}
which is also translation-invariant.
\begin{eqnarray*}
&  k(\vDD) = & \prod_{i=1}^L \int_{\vomega_i}  p_i(\vomega_i)e^{j\vomega_i\T \vDD} \mathrm{d} \vomega_i  \\
& = & \!\! 
 \int\limits_{\vomega_1 \! \mathellipsis\! \vomega_L} p_1(\vomega_1) \! \mathellipsis \! p_L(\vomega_L)
 e^{j (\sum_i^L\vomega_i\T) \vDD} 
 \mathrm{d} \vomega_1\! \mathellipsis\! \mathrm{d} \vomega_L \\
& \stackrel{\tilde{\omega} = \sum_i^L\vomega_i}{=} &
\int\limits_{\tilde{\vomega}} 
\left[  \int\limits_{\vomega_1 \mathellipsis \vomega_{L-1}} \! p_1(\vomega_{1})\ldots p_{L-1}(\vomega_{L-1})  \right.
\\ && \left. \qquad p_L(\tilde{\vomega} - \sum_i^{L-1}\vomega_i) \mathrm{d} \vomega_1\! \mathellipsis\! \mathrm{d} \vomega_{L-1}  \right] 
e^{j \tilde{\vomega}\T \Delta} \mathrm{d} \tilde{\vomega} \\
&= & \int p_{\tilde{\vomega}} (\tilde {\vomega})e^{j \tilde{\vomega}\T \Delta} \mathrm{d} \tilde{\vomega}  \\
& = & \int p_{\tilde{\omega}} (\tilde {\omega})e^{j \tilde{\vomega}\T (\vx-\vz)} \mathrm{d} \tilde{\omega}  \\
& = & \expect{\vct{\tilde{\omega}}}{\phi_{\tilde{\omega}}(\vx)\phi_{\tilde{\omega}}(\vz)^*}
\end{eqnarray*}

We have used the fact (due to convolution theorem) that
\begin{eqnarray*}
&& \int\limits_{\vomega_1 \! \mathellipsis \vomega_{L-1}}   p_1(\vomega_{1})\ldots p_{L-1}(\vomega_{L-1}) \\
&& \quad p_L(\tilde{\vomega} - \sum_i^{L-1}\vomega_i) \mathrm{d} \vomega_1 \! \mathellipsis\! \mathrm{d} \vomega_{L-1}  \\
& =  & p_1(\vomega_1) \ast p_2(\vomega_2)   \ast \cdots \ast p_L(\vomega_L) = p_{\tilde{\vomega}} (\tilde {\vomega})
\end{eqnarray*}
It means we have found a new distribution $p_{\tilde{\vomega}} (\tilde {\vomega})$ as the random projection generating distribution for the new kernel $$k(\cdot, \cdot) = \prod_{i=1}^L k_i (\cdot, \cdot).$$ 

From the definition of $\tilde{\omega}$, in order to sample from $p_{\tilde{\vomega}} (\tilde {\vomega})$, 
we can simply use the sum of independent samples from $p_i(\vomega_i)$.
\end{proof}

\section{Detailed experimental Results}
\subsection{Image recognition}

We first provide details on our empirical studies on challenging problems in image recognition.

\subsubsection{Handwritten digit recognition}
\paragraph{Dataset and Preprocessing}
Dataset is described in section 5.2 of the main text. We scale the input between $[0,1)$ by dividing 256.
\paragraph{Kernel} We use Gaussian RBF and Laplacian kernels with kernel bandwidth selected from  $\{1, 1.5, 2, 2.5, 3\}\times$ median of the pairwise distance in data.  We select the learning rate from \{$5\times 10^{-4}, 10^{-3}, 5\times 10^{-3},10^{-2},5\times 10^{-2},10^{-1}$\}. The random feature dimension we have used is 150,000. Performance with different dimension is shown in Table \ref{sTkernelMnist}.
\paragraph{DNN}
We trained DNNs with $1, 2, 3, $ or $4$ hidden layers, with 1000, 2000, 2000 and 2000 hidden units respectively. We firstly pre-trained 1 Gaussian-Bernoulli and 3  consecutive Bernoulli restricted Boltzmann machines (RBMs), all using Stochastic Gradient Descent (SGD) with Contrastive Divergence (CD-1) algorithm \citep{hinton2002training}. 

We select learning  rate from \{$10^{-1}, 1.5\times10^{-1}$\}, momentum from \{0.5, 0.9\} and set L2 regularization to  $2\times10^{-4}$ for 2 epochs of pretraining. 
In finetuning, we tune SGD with learning rate from \{$5\times10^{-3}, 10^{-2}, 5\times10^{-2}, 10^{-1}, 5\times10^{-1}$\}, momentum from \{0.7, 0.9\}. We decrease the learning rate by a factor of 0.99 for every epoch and set mini-batch size to 100, L2 regularization to 0. We use early-stopping to control overfitting. When trained with data augmentation, we use smaller learning rate and run for more epochs.
\paragraph{Data Augmentation} We use mask-out noise with ratio \{0.1, 0.2, 0.3\} for both kernel methods and DNN.
\paragraph{Results}
Table \ref{tMNISTall} compares the performance of kernel methods to deep neural nets of different architectures. The best result of DNN is a 4-hidden-layer neural network. Deep nets generally have slightly smaller test errors. Kernel models benefit more from data augmentation and achieve similar error rates.

\begin{table}[h]
\small
\centering
\caption{Kernel Methods on MNIST-6.7M (error rates \%) }
\label{sTkernelMnist}
\vspace{0.5em}
\begin{tabular}{|c|c|c|c|c|c|c|}
\hline
Kernel type & Data aug. & 10K       & 50K       & 100K      & 150K          \\ \hline
Gaussian  & No & 1.45/1.42 & 1.03/1.25 & 0.98/1.12 & 0.97/1.09 \\ \hline
Laplacian & No & 1.93/1.93 & 1.21/1.34 & 1.16/1.17 & 1.10/1.13 \\ \hline
Gaussian  & Yes & - & 0.83/1.03  & 0.79/0.92 & 0.77/0.85 \\ \hline
\end{tabular}
\end{table}

\begin{table}[h]

\small
\centering
\caption{DNN on MNIST-6.7M (error rates \%) }
\label{tMNISTall}
\vspace{0.5em}
\begin{tabular}{c|cccc }
\hline
Model     & \multicolumn{2}{c}{Original} & \multicolumn{2}{c}{Augmented} \\ \hline
	      & Validation & Test & Validation  & Test\\ \hline
kernel & 0.97	&   1.09  &  0.77 & 0.85  \\ \hline
4 hidden & 0.71	&   \textbf{0.69}  &  0.64 & 0.80  \\ \hline
3 hidden &  0.78 & 0.73 &  0.74  & 0.77  \\ \hline
2 hidden  &   0.76 & 0.71 & 0.64 & 0.79 \\ \hline
1 hidden & 0.84 & 0.95 & 0.79 & \textbf{0.76} \\ \hline
\end{tabular}
\end{table}

\paragraph{PCA Embedding}
In addition to t-SNE visualization, we also project each model's pre-softmax activation onto the first two principle components given by PCA. Fig.~\ref{fMNIST2} contrasts the PCA embeddings for 1000 samples from MNIST-6.7M's test set. 
\begin{figure}
\centering
\includegraphics[width=0.49\columnwidth]{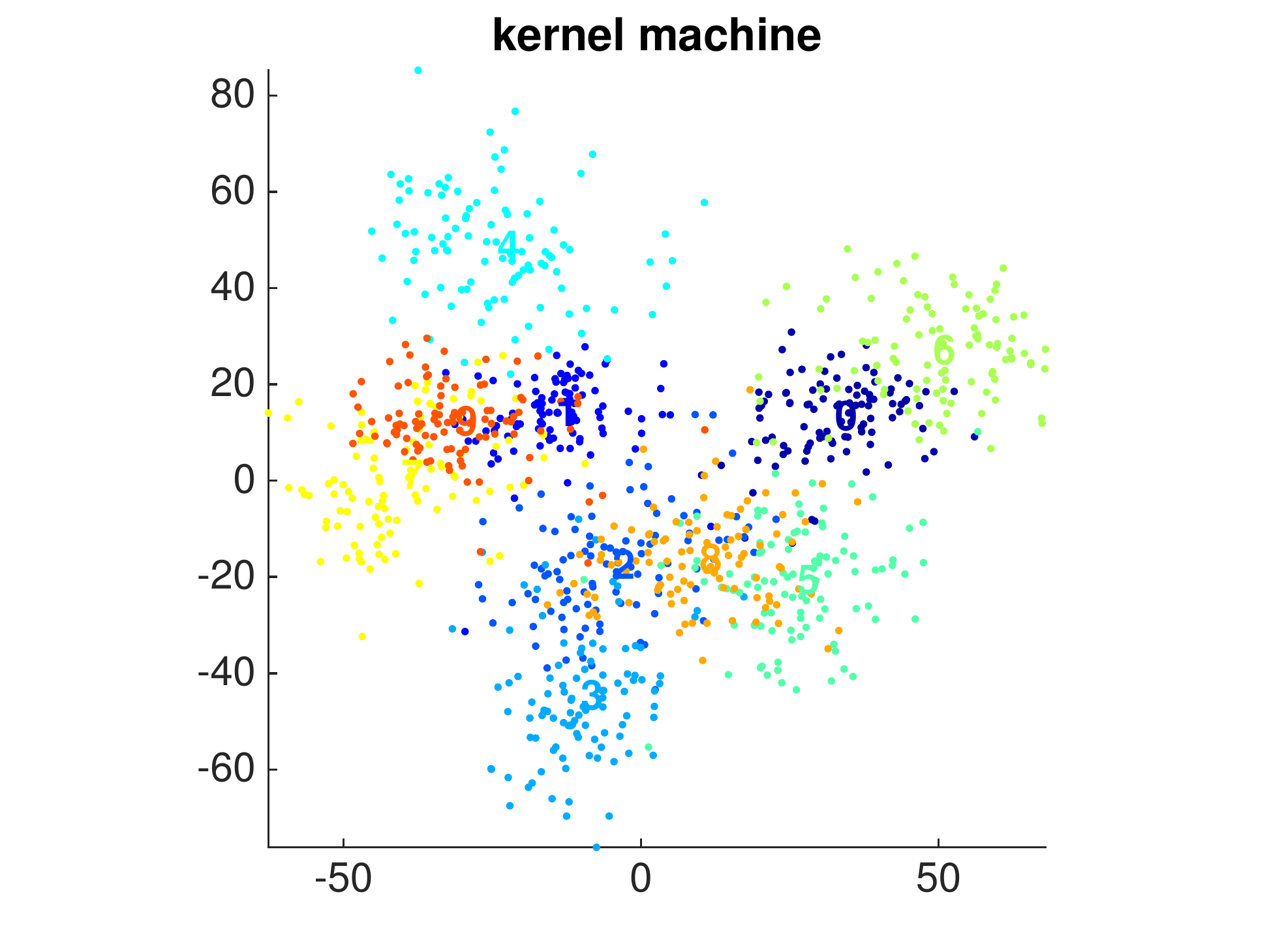}\includegraphics[width=0.49\columnwidth]{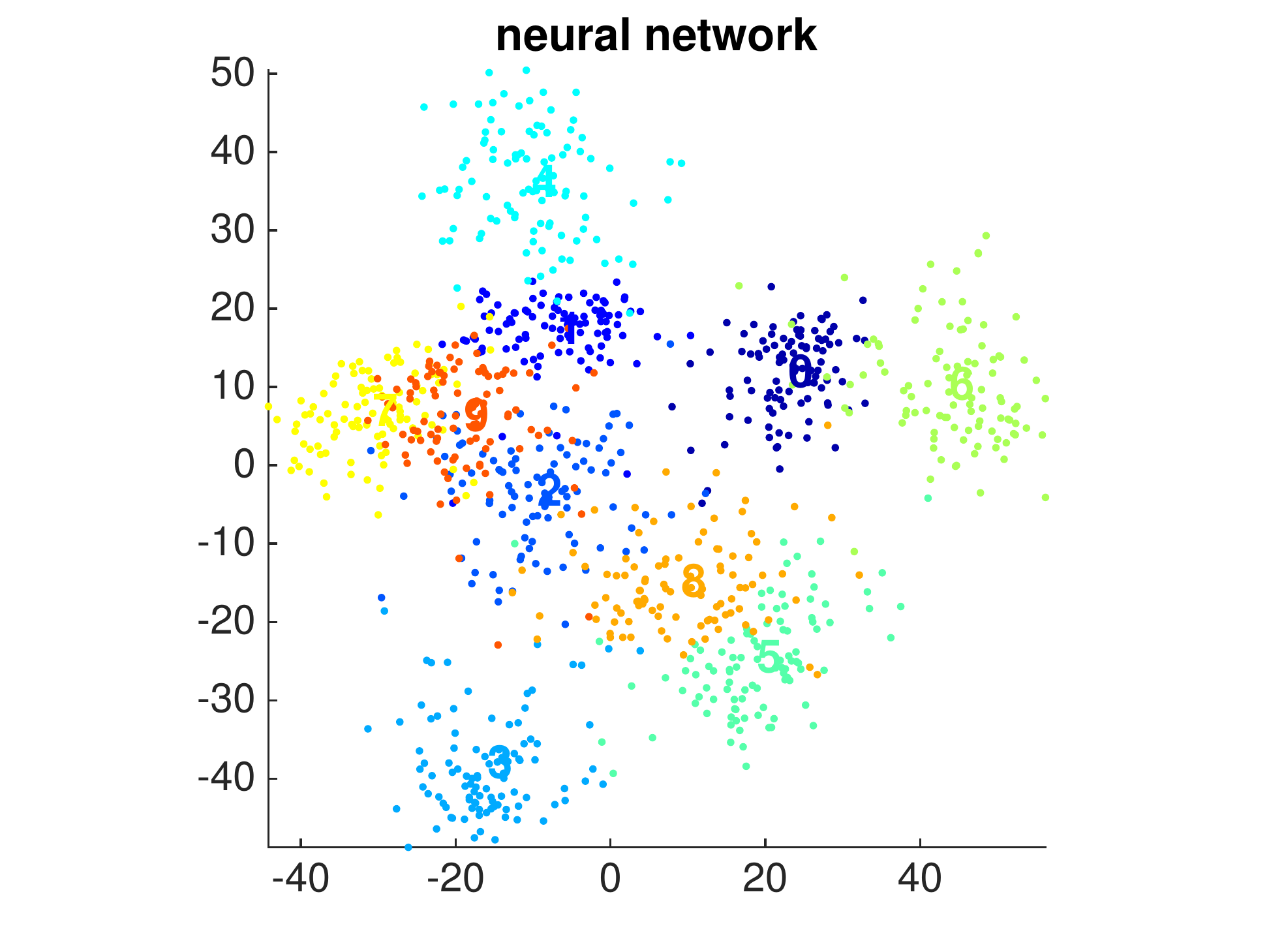}
\caption{Embeddings of data by kernel (Left) and by DNN (Rights).}
\label{fMNIST2}
\vspace{-1em}
\end{figure}
Neural network seems to give more spread out embeddings compared to kernel machine. The most noticeable class is digit 3 (light blue in lower left corner) and digit 6 (green on the right).

\subsubsection{Object Recognition}
\paragraph{Dataset and Preprocessing} Dataset is described in section 5.2 of the main text. We scale the input between $[0,1)$ by dividing 256. No other preprocessing is used because we would like to relate to previously reported results on DNNs on this data where preprocessing is not applied. 

\paragraph{Kernel} Gaussian kernel is used. We achieve the best performance by using 4,000,000 random features. This is done by training 200K single models in parallel and then combine. Table \ref{tTkernelCifar} shows the performance of kernel with respect to different number of random features. Similarly, we select bandwidth from  \{0.5,1,1.5,2,3\} $\times$ median distance and learning rate from $\{5\times 10^{-4}, 10^{-3}, 5\times 10^{-3},10^{-2},5\times 10^{-2},10^{-1} \}$.

\paragraph{DNN}
We trained DNNs with 1 to 4 hidden layers, with 2000  hidden units per layer. In pretraining, we use a Gaussian RBM for the input layer and three Bernoulli RBMs for  intermediate hidden layers using CD-1 algorithm. (We adopt the parameterization of GRBM in \cite{Cho11ICANN}, which shows better performance. )

For Gaussian RBM, we tune learning rate from \{$5\times 10^{-5}, 10^{-4}, 2\times10^{-4}$\}, momentum from \{0.2, 0.5, 0.9\}\footnote{Momentum can be increased to another choice from the three after 5 epochs.}, and L2 regularization from \{$2\times10^{-5}, 2\times10^{-4}$\}.  For Bernoulli RBM, we tune learning rate from \{$10^{-2}, 2.5\times10^{-2}, 5\times10^{-2}$\}, momentum from \{0.2, 0.5, 0.9\}, and L2 regularization is \{$2\times10^{-5}$\}.
In finetuning, we tune SGD with learning rate from \{$4\times 10^{-2}, 8 \times10^{-2}, 1.2\times10^{-1}$\}, momentum from \{0.2, 0.5, 0.9\}, decrease the learning rate by a factor of 0.9 for every 20 or 50 epochs and set mini-batch size to 50, L2 regularization to 0. We use early-stopping to control overfitting. When trained with data augmentation, we use smaller learning rate and run for more epochs.

We used constant learning rate throughout 30 epochs and update momentum after 5 epochs. In finetuning stage, we used stochastic gradient descent with 0.9 momentum, fixed learning rate schedule, decreasing the learning rate by 10 after 50 epochs. The optimal model is selected according to the classification accuracy on validation dataset. We trained DNNs with 1, 2, 3 and 4 hidden layers with 2000 hiddenunits per layer. Overfitting was observed after we increased model from 3 hidden layers to 4 hidden layers. 

\paragraph{Data Augmentation} We apply additive Gaussian noise with standard deviation \{0.1, 0.2, 0.3\} on raw pixel for both kernel methods and DNN.
\paragraph{Results}
Table \ref{tCIFARall} contrasts results of kernel models and DNN. We observe overfitting for 4 hidden layer neural network, and achieve best results in a 3 hidden layer architecture. Deeper models start to overfit and give worse validation and test performance. Kernel models achieve the best error when data augmentation is used.

\begin{table}
\small
\centering
\caption{Kernel Methods on CIFAR-10 (error rate\%) }
\label{tTkernelCifar}
\vspace{0.5em}
{
\begin{tabular}{c|cccc}
\hline
Gaussian r.f.    & \multicolumn{2}{c}{Original} & \multicolumn{2}{c}{Augmented} \\ \hline
	      & Validation & Test & Validation  & Test\\ \hline
200K & 43.74 &  44.48  & 42.15 & 43.13  \\ \hline
1M  & 43.43 & 44.08 & 41.62 &  42.38 \\ \hline
2M   & 43.26 & 44.04 & 41.47 & 42.26\\ \hline
4M  & 43.22 & \textbf{43.93} & 41.36 & \textbf{42.23} \\ \hline
\end{tabular}

}
\end{table}

\begin{table}
\small
\centering
\caption{DNN on CIFAR-10  (error rate \%)}
\label{tCIFARall}
\vspace{0.5em}

\begin{tabular}{c|cccc }
\hline
Model     & \multicolumn{2}{c}{Original} & \multicolumn{2}{c}{Augmented} \\ \hline
	      & Validation & Test & Validation  & Test\\ \hline
kernel & 	43.22 &  43.93  & 41.36 & \textbf{42.23}  \\ \hline
4 hidden & 43.21 &  43.74  & 43.00 & 43.38  \\ \hline
3 hidden & 42.89 & \textbf{43.29} & 42.93 &  43.35 \\ \hline
2 hidden  & 43.30 & 43.76 & 43.80 & 44.81\\ \hline
1 hidden & 48.40 & 48.94 & 47.28 & 47.79 \\ \hline
\end{tabular}
\end{table}

\subsection{Automatic speech recognition}

In what follows, we provide comprehensive details on our empirical studies on challenging problems in automatic speech recognition.

\subsubsection{Tasks, datasets and evaluation metrics} 

\paragraph{Task}\ We have selected the task of acoustic modeling, a crucial component in automatic speech recognition. In its most basic form,  acoustic modeling is analogous to the conventional multi-class classification, that is, to learn a predictive model to assign phoneme context-dependent state labels to short segments of speech, called frames. While speech signals are highly non-stationary and context-sensitive, acoustic modeling addresses this issue by using acoustic features extracted from context windows (i.e., neighboring frames in temporal proximity) to capture the transient characteristics of the signals. 

\paragraph{Data characteristics}\ To this end, we use two datasets:  the IARPA Babel Program
Cantonese (IARPA-babel101-v0.4c) and Bengali (IARPA-babel103b-v0.4b)
limited language packs.  Each pack contains a 20-hour training,  and a 20-hour test sets. 
We designate about 10\% of the training data as a held-out set to be used for model selection and tuning (i.e., tuning hyperparameters etc).  The training, held-out, and test sets contain different speakers.  The acoustic data is very challenging as it is two-person conversations between people who
know each other well (family and friends) recorded over telephone
channels (in most cases with mobile telephones) from speakers in a
wide variety of acoustic environments, including moving vehicles and
public places.  As a result, it contains many natural phenomena such
as mispronunciations, disfluencies, laughter, rapid speech, background
noise, and channel variability. Compared to the more familiar TIMIT corpus, which contains about 4 hours of training data, the Babel data is substantially more
challenging because  the TIMIT data is read
speech recorded in a well-controlled, quiet studio environment.

As is standard on previous work using DNNs for speech recognition, the
data is preprocessed using Gaussian mixture models to give
alignments between phoneme state labels and 10-millisecond-frames of
speech~\citep{kingsbury13high}. The acoustic features are 360-dimensional real-valued dense vectors. There are 1000 (non-overlapping) phoneme context-dependent state labels for each language pack. For Cantonese, there are about 7.5 million data points for training, 0.9 million for held-out, and 7.2 million for test, and on Bengali,  7.7 million for training, 1.0 million for held-out and  7.1 million for test. 


\paragraph{Evaluation metrics}\ We will be reporting 3 evaluation metrics, typically found in mainstream speech recognition research.\\[1em]

\noindent
\textsf{Perplexity}\ Given a set of examples, $\{ (\vx_i, y_i), i = 1 \ldots m\}$, the perplexity
is defined as
\[
\textsf{perp} = \exp\left\{-\frac{1}{m} \sum_{i=1}^m \log p(y_i | \vct{x}_i)\right\}
\]
The perplexity measure is lower bound by 1 when all predictions are perfect: $p(y_i | \vx_i) = 1$ for all samples. With random guessing $p(y_i |
\vx_i) = 1/C$, where $C$ is the number of classes, the perplexity attains $C$.  

We use the perplexity measure on the held-out for model selection and tuning. This is because the perplexity is often found to be correlated with the next two performance measures.\\[1em]

\noindent
\textsf{Accuracy}\ The classification accuracy is defined as 
\[
 \textsf{acc} =  \frac{1}{m} \sum_{i=1}^m \mathbbm{1} \left[ y_i = \argmax_{y \in {1,2, \ldots, C}} p(y|\vct{x}_i)\right] 
\]

\noindent
\textsf{Token Error Rate (TER)} Speech recognition is inherently a sequence recognition problem. Thus, \textsf{perp} and \textsf{acc} provide only proxy (and intermediate goals) to the sequence recognition error. To measure the latter, a full automatic speech recognition pipeline is necessary where the posterior probabilities of the phoneme labels $p(y|\vx)$ are combined with the  probabilities of the language models (of the interested linguistic units such as words) to yield the most probable sequence of those units.  A best alignment with the ground-truth sequence is computed, yielding token error rates.
For Bengali, the token error rate is the word-error-rate (WER) and for Cantonese, it is character-error-rate (CER). 

Because it entails performing speech recognition, obtaining \textsf{TER} is computationally costly thus it is rarely used for model selection and tuning.  Note also that the token error rates obtained on the Babel tasks are
much higher than those are reported for other conversational speech tasks
such as Switchboard or Broadcast News.  This is because we have much less training data for Babel than for the other tasks.  This {\em low-resource}\/
setting is an important one in the speech processing area, given that
there are a large number of languages in the world for which speech and language models do
not currently exist.

\subsubsection{Deep neural nets acoustic models}

There are many variants of DNNs techniques. We have decided to choose two flavors that are very different in learning from data, in order to have a broader comparison. In either case, our model tuning is extensive.

\paragraph{IBM's DNN}  We have used IBM's proprietary system Attila for the conventional speech recognition that is adapted for the above-mentioned Babel task. A detailed description appears in~\cite{kingsbury13high}. Attila contains a state-of-the-art acoustic model  provided by IBM. It also powers our full ASR pipeline in order to compute token error rate (\textsf{TER}). We have also used it to convert raw speech signals into acoustic features. Concretely, the features at a frame is a 40-dimensional speaker-adapted representation that has previously been shown to work well with DNN acoustic models~\cite{kingsbury13high}. Features at 8 neighboring contextual frames are concatenated, yield 360-dimensional features.  We have used the same features for our kernel methods.

IBM's DNN acoustic model contains five hidden-layers, each of which contains 1,024 units with logistic nonlinearities. The output is a softmax nonlinearity with 1,000 targets that correspond to quinphone context-dependent HMM states clustered using decision trees. All layers in the DNN are fully connected. The training of the DNN occurs in two stages. First, a greedy layer-wise discriminative pretraining ~\citep{seide11cddnn} to set the weights for each layer in a reasonable range. Then, the cross-entropy criterion is minimized with respect to all parameters in the network, using  stochastic gradient descent with a mini-batch size of 250 samples, without momentum, and with annealing the learning rate based on the reduction in cross-entropy loss on a held-out set. 

\paragraph{RBM-DNN}  We have designed another version of DNN, following the original Restricted Boltzman Machine (RBM)-based training  procedure for learning DNNs\citep{hinton06dbn}. Specifically, the pre-training is unsupervised.  We have trained DNNs with 1, 2, 3 and 4 hidden layers, and 500, 1000, and 2000 hidden units per layer (thus, totally 12 architectures per language).

The first hidden layer is a Gaussian RBM and the upper layers are Binary-Bernoulli RBM. In pre-training, we use 5 epochs of SGD with Contrastive Divergence (CD-1) algorithm  on all training data. We tuned 3 hyper-parameters, which are learning rate, momentum, and the strength for an $\ell_2$ regularizer. For fine-tuning, we used error back-propagation. We tuned the initial learning rate, learning rate decay, momentum and the strength for another $\ell_2$ regularizer.   The fine-tuning usually converges in 10 epochs.

\subsubsection{Kernel acoustic models}

The development of kernel acoustic models does not require combinatory searching over many factors. We experimented only two types of kernels: Gaussian RBF and Laplacian kernels. The only hyper-parameter there to tune is the kernel bandwidth, which ranges from 0.3 - 5 median of the pairwise distances in the data. (Typically, the median works well.)

The random feature dimensions we have used ranging from 2,000 to 400,000, though a stable performance is often observed at 25,000 or above. For training with very large number of features, we used the parallel training procedure, described in section 4.1 of the main text.

All kernel acoustic models are multi-nomial logistic regression, thus optimized by convex optimization. As mentioned in section 4.1 of the main text, we use Stochastic Average Gradient (SAG), which efficiently leverages the convexity property. We do tune the step size, selected from a loose range of 4 values $\{10^{-4}, 10^{-3}, 10^{-2}, 10^{-1}\}$.

For additive and multiplicative kernel combinations,  we combine only two, one Gaussian and the other Laplacian. For additive combinations,  we first train two models, one for each kernel. The combining coefficient $\alpha$ is selected from $0.1, 0.2, \ldots, 0.9$. For composite kernels, we compose Gaussian with Laplacian. We perform a supervised dimensionality reduction, as described in section 4.2 of the main text. The reduced dimensionality is chosen from 50, 100, or 360. The first kernel's bandwidth is greedily selected to be optimal as a single-kernel acoustic model. The other kernel's bandwidth is selected after composing the features.

\begin{table}

\centering
\caption{Best perplexity and accuracy by different models (see texts for description of different models)}
\label{tPerpAcc}
\begin{tabular}{|c|c|c|c|c|}\hline
& \multicolumn{2}{|c|}{Bengali} & \multicolumn{2}{|c|}{Cantonese}  \\ \hline
Model   & \textsf{perp}  & \textsf{acc} (\%) & \textsf{perp}  & \textsf{acc} (\%) \\ \hline \hline
\ibm  &   3.4/3.5 	&	71.5/71.2  	 	& 6.8/6.16	&	56.8/58.5	 \\ \hline
\rbm & {\color{red}3.3/3.4}  	& 	{\color{red}72.1/71.6} 	 	& {\color{red}6.2/5.7}	&	{\color{red}58.3/59.3} 	\\ \hline
\onek & 3.7/3.8	&      70.1/69.7		 	&6.8/6.2	&	57.0/58.3	\\ \hline
\ak &3.6/3.8   	& 70.3/70.0 	 & 6.7/6.0 	&57.1/58.5  \\ \hline
\mk & 3.7/3.8 	& 70.3/69.9 	 &6.7/6.1 	&57.1/58.4  \\ \hline
\ck & {\color{blue}3.5/3.6} & {\color{blue}71.0/70.4} &{\color{blue}6.5/5.7} 	&{\color{blue}57.3/58.8}   \\ \hline
\end{tabular}

\end{table}

\begin{table}
\centering
\caption{Performance of \rbm acoustic models}
\label{tDNN}
\begin{tabular}{|c|c|c|c|c|}\hline
& \multicolumn{2}{|c|}{Bengali} & \multicolumn{2}{|c|}{Cantonese}  \\ \hline
(h, L)   & \textsf{perp}  & \textsf{acc} (\%) & \textsf{perp}  & \textsf{acc} (\%) \\ \hline \hline
$(1, 500)$  &   3.9/3.9 	&	69.2/69.3  	 	& 7.1/6.4	&	55.8/57.4	 \\ \hline
$(2, 500)$  &   3.5/3.6 	&	70.9/70.7  	 	& 6.6/6.1	&	57.3/58.4	 \\ \hline
$(3, 500)$  &   3.5/3.5 	&	71.2/70.9  	 	& 6.4/5.9	&	57.7/58.6	 \\ \hline
$(4, 500)$  &   3.4/3.5 	&	71.2/70.8  	 	& 6.4/5.9	&	57.5/58.7	 \\ \hline
$(1, 1000)$  &   3.7/3.7 	&	70.1/70.1  	 	& 6.8/6.2	&	56.4/58.0	 \\ \hline
$(2, 1000)$  &   3.4/3.4 	&	71.6/71.4  	 	& 6.3/5.8	&	58.2/59.0	 \\ \hline
$(3, 1000)$  &   3.4/3.5 	&	71.7/71.3  	 	& 6.3/5.7	&	58.0/59.2	 \\ \hline
$(4, 1000)$  &   3.3/3.5 	&	71.8/71.4  	 	& 6.6/5.8	&	57.1/58.6	 \\ \hline
$(1, 2000)$  &   3.6/3.7 	&	70.5/70.3  	 	& 6.7/6.1	&	56.9/58.1	 \\ \hline
$(2, 2000)$  &   3.4/3.4 	&	71.8/71.4  	 	& 6.2/5.7	&	58.3/59.3	 \\ \hline
$(3, 2000)$  &   3.4/3.5 	&	71.5/71.2  	 	& 6.2/5.6	&	57.8/59.1	 \\ \hline
$(4, 2000)$  &   3.3/3.4 	&	72.1/71.6  	 	& 6.4/5.8	&	57.8/59.1	 \\ \hline
\end{tabular}

\end{table}

\begin{table}[h]
\small
\centering
\caption{Performance of single Laplacian kernel}
\label{tKernel}
\vspace{0.5em}
\begin{tabular}{|c|c|c|c|c|}\hline
& \multicolumn{2}{|c|}{Bengali} & \multicolumn{2}{|c|}{Cantonese}  \\ \hline
Dim   & \textsf{perp}  & \textsf{acc} (\%) & \textsf{perp} & \textsf{acc} (\%) \\ \hline \hline
\textsf{2k} &4.4/4.4 	&	66.5/66.8  	 	& 8.5/7.4 & 52.7/54.8	 \\ \hline
\textsf{5k} &4.1/4.2  	& 	67.8/67.8 	 	& 7.8/7.0 & 53.9/56.0		\\ \hline
\textsf{10k} &4.0/4.1	&      68.4/68.3		 	&7.5/6.7  &  54.9/56.6 	\\ \hline
\textsf{25k} &3.8/3.9   	& 69.2/69.0 	 &7.1/6.4&55.9/57.3   \\ \hline
\textsf{50k} & 3.8/3.9 	& 69.7/69.4 	 &6.9/6.2&56.5/57.9  \\ \hline
\textsf{100k} & 3.7/3.8 & 70.0/69.6 		&6.8/6.2&56.8/58.2   \\ \hline
\textsf{200k} & 3.7/3.8 &70.1/69.7 	&6.8/6.2&57.0/58.3  \\ \hline
\end{tabular}
\end{table}

\begin{table}
\small
\centering
\caption{Best Token Error Rates on Test Set ($\%$)}
\label{tTERComp1}
\begin{tabular}{|c|c|c|}\hline
Model  & Bengali & Cantonese \\ \hline\hline
\ibm & 70.4  	&  	67.3	 \\ \hline
\rbm & {\color{red}69.5} 	& 	66.3 \\ \hline
\onek &  70.0	& 	{\color{red}65.7}	\\ \hline
\ak &  73 &  68.8  \\ \hline
\mk & 72.8  & 69.1 \\ \hline
\ck & 71.2 &  68.1  \\ \hline \hline
\end{tabular}
\end{table}

\subsubsection{Results on Perplexity and Accuracy}

Table~\ref{tPerpAcc} concisely contrasts the best  perplexity and accuracy attained by various systems: \ibm (IBM's DNN), \rbm (RBM-trained DNN), \onek (single kernel based model), \ak (additive combination of two kernels), \mk (multiplicative combination of two kernels) and \ck (composite of two kernels). We report the metrics on both the held-out and the test datasets (the numbers are separated by a /). In general, the metrics are consistent across both datasets and \textsf{perp} correlates with \textsc{acc} reasonably well. 

On Bengali, across all systems, the \rbm attains the best perplexity (red colored numbers in the table), outperforming \ibm and suggesting that unsupervised pre-training is advantageous. The best performing kernel model is \ck, trailing slightly behind \rbm and \ibm.

Similarly, on Cantonese, \rbm performs the best, followed by \ck, both outperforming \ibm. As an illustrate example, we show in Table~\ref{tDNN} the performance of \rbm on Bengali, under different types of architectures ($h$ is the number of hidden layers and $L$ the number of hidden units).  Meanwhile, in Table~\ref{tKernel}, we show the performance of single Laplacian kernel acoustic model with different number of random features.

Contrasting these two tables, it is interesting to observe that kernel models use far more parameters than DNNs to achieve similar perplexity and accuracy.  For instance, for a \rbm with $(h=1, L= 500)$ with a perplexity of $3.9$, the number of parameters is $360 \times 500 + 500 \times 1000 = 0.68$ million. This is  a fraction of a comparable kernel model with Dim=\textsf{10k}$ \times 1000 = 10$ million parameters. In some way, this ratio provides an intuitive measure of  the price being convenient, i.e., using random features in kernel models instead of adapting features to the data as in DNNs.

\subsubsection{Results on Token Error Rates}

Table~\ref{tTERComp1} reports the performance of various models measured in \textsf{TER}, another important and more relevant metric to speech recognition errors.

Note that the RBM-trained DNN (\rbm ) performs the best on Bengali, but our best kernel model performs the best on Cantonese.  Both perform better than IBM's DNN. On Cantonese, the improvement of our kernel model over \ibm is noticeably large ($1.6\%$ reduction in absolute).

Table~\ref{tTERComp2} highlights several interesting comparison between \rbm and kernel models. Concretely, it seems that DNNs need to  be  big enough in order to reach the proximity of its best \textsc{TER}. On the other end, the kernel models' performance plateaus rather quickly.   This is the opposite to what we have observed when we compare two methods using perplexity and accuracy.

One possible explanation is that for different models, the relationship between perplexity and \textsc{TER} are different. This is certainly plausible, given \textsc{TER} is highly complex to compute and two different models might explore parameter spaces very differently.

Another possible explanation is that these two different models learn different representations that bias either toward perplexity or toward \textsc{TER}.  Table~\ref{tTERComb} suggests that this might indeed be true: as we combine two different models, we see handsome gains in performance over each individual one's.

\begin{table}
\small
\centering
\caption{Detailed Comparison on \textsc{TER} for Bengali }
\label{tTERComp2}
\begin{tabular}{|c|c|c|}\hline
Model  & Arch. & \textsf{TER} ($\%$) \\ \hline\hline
\rbm & $h=2, L=1000$	& 	73.1 \\ \hline
\rbm & $h=3, L=1000$	& 	72.7 \\ \hline
\rbm & $h=2, L=2000$ & 72.4\\ \hline
\rbm & $h=3, L=2000$ & 72.2\\ \hline
\rbm & $h=4, L=1000$ & 69.8\\ \hline
\rbm & $h=4, L=2000$ & 69.5\\ \hline
\onek & Dim = 25k & 73.1\\ \hline
\onek & Dim = 50k & 70.2\\ \hline
\onek & Dim = 100k & 70.0\\ \hline
\onek & Dim = 200k & 70.0\\ \hline
\end{tabular}
\end{table}

\begin{table}
\small
\centering
\caption{Token Error Rates ($\%$) for Combined Models }
\label{tTERComb}
\begin{tabular}{|c|c|c|}\hline
Model  & Bengali & Cantonese \\ \hline
\textsc{best single system} & 69.5 & 65.7 \\ \hline
\rbm $(h=3, L=2000)$ + \onek & 69.7 & 65.3 \\ \hline
\rbm $(h=4, L=1000)$ + \onek & 69.2 & 64.9 \\ \hline
\rbm $(h=4, L=2000)$ + \onek & 69.1 & 64.9 \\ \hline
\end{tabular}
\end{table}

\subsubsection{DNN and kernels learn complementary representations}

Inspired by what we have observed in the previous section, we set out to analyze in what way the representations learnt by two different models might be complementary. We have obtained preliminary results.

We took a learned DNN (we used the best perform one in terms of \textsf{TER}) and computed its pre-activation to the output layer, which is a linear transformation of the last hidden layer's outputs. For the best performing single-kernel model,  we computed the pre-activation similarly. Note that since they both predict the same set of labels, the pre-activations from either model have the same dimensionality.

We perform PCA on them independently and then visualize in 2D. Fig.~\ref{fRep} displays the two scatter plots where each has 1000 points, representing the means of the learned representations for data points in each class. To visualize easily, we color each point not by its phoneme state labels. Instead, we collapse them into phone labels (which are considerably few, generally around 40 - 60). 

An initial examination seems to suggest that kernel models' representations tend to form clumps for data from the same class. In the figure, the most obvious observation is the cluster in the blue color. On the other end, those blue color scattered data points do not seem to form a large and tight cluster 
under the representations learned by the DNNs -- they seem to be more spread out.

The clumps seem to be indicative of the Gaussian kernels we have used. However, how important they are and in what way, the more flourish patterns by DNNs' representations are more advantageous require more careful and detailed analysis. We hope our work has provided enough incentives and tools for that pursuit. 

\begin{figure}
\centering
\includegraphics[width=0.49\columnwidth]{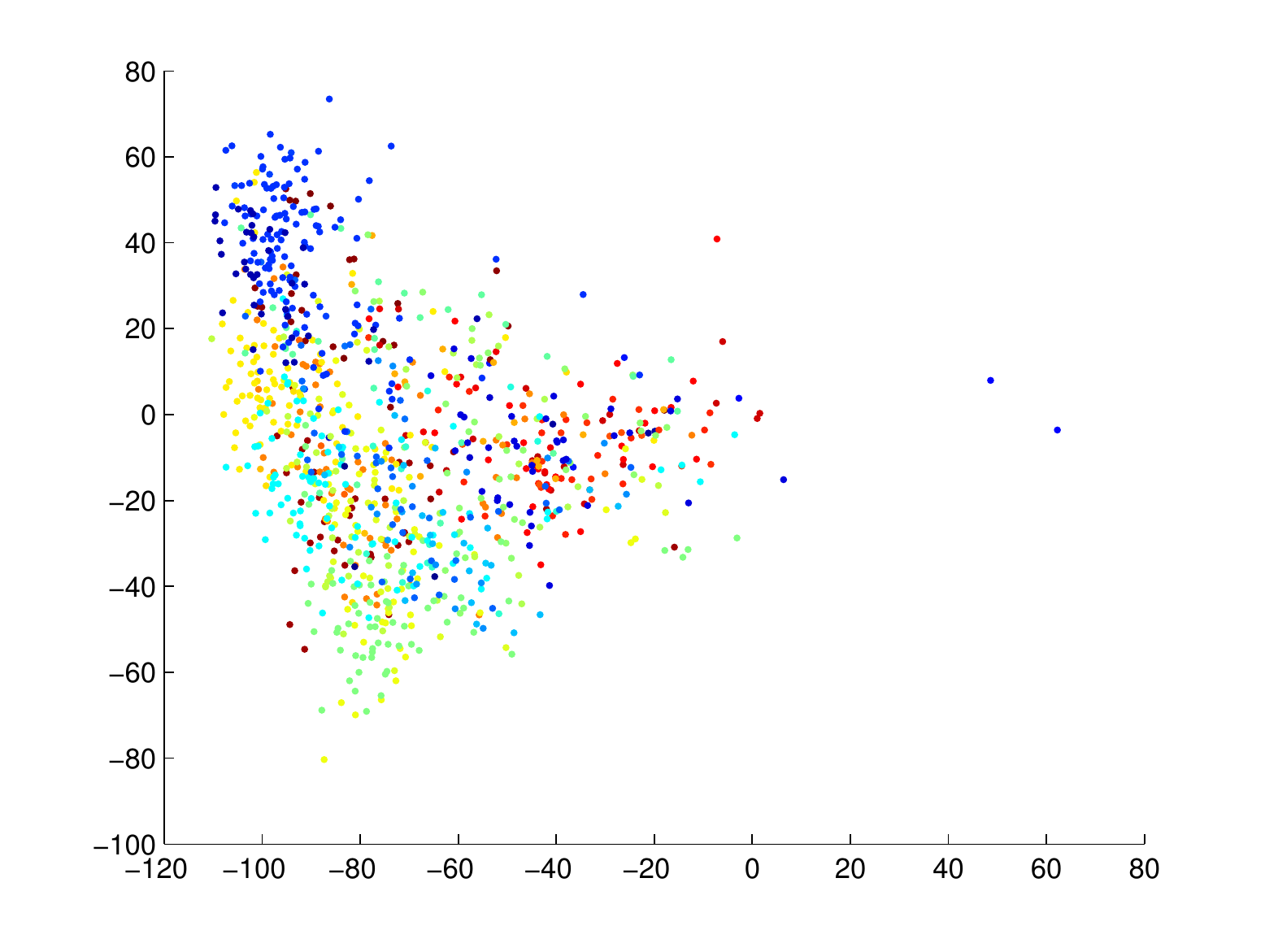}
\includegraphics[width=0.49\columnwidth]{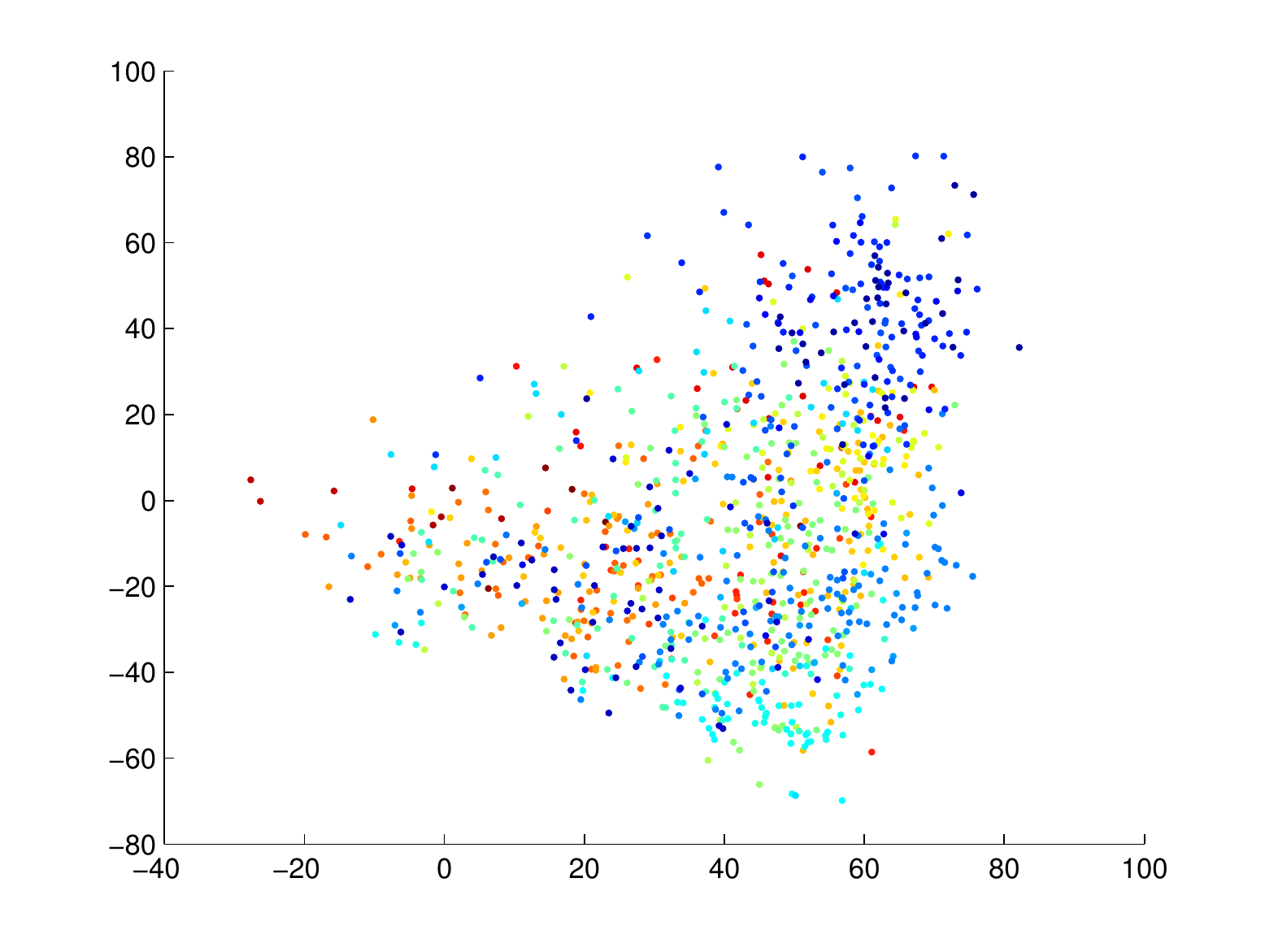}
\caption{Contrast the learned representations by kernel models (left plot) and DNNs (right plot)}
\label{fRep}
\end{figure}

\end{document}